\documentclass[a4paper]{article}
\usepackage[T1]{fontenc}
\usepackage[utf8]{inputenc}
\usepackage{authblk}
\usepackage{amssymb}
\usepackage[sort&compress,numbers,square]{natbib}
\usepackage[titlenumbered,ruled,lined,linesnumbered,algo2e]{algorithm2e}
\usepackage{graphicx}
\usepackage{amsmath}
\usepackage{xspace}
\usepackage{bm}
\usepackage{amsthm}
\usepackage{afterpage}
\usepackage{indentfirst}
\usepackage{enumerate}
\usepackage{url}
\usepackage{comment}
\usepackage[scale={0.74,0.78}, hmarginratio=1:1, vmarginratio=1:1, headheight=2ex, headsep = 2ex, footskip = 3.6ex, nomarginpar]{geometry}
\usepackage{color}
\usepackage{hyperref}

\newtheorem{theorem}{Theorem}
\newtheorem{problem}[theorem]{Problem}
\newtheorem{lemma}[theorem]{Lemma}
\newtheorem{proposition}[theorem]{Proposition}
\newtheorem{definition}[theorem]{Definition}

\newtheorem{corollary}[theorem]{Corollary}
\newtheorem{remark}[theorem]{Remark}

\newcommand{\nbb}{\ensuremath{\mathbb{N}}}
\newcommand{\hcal}{\ensuremath{\mathcal{H}}}

\newcommand{\xcal}{\mathcal{X}}
\newcommand{\zcal}{\mathcal{Z}}
\newcommand{\ycal}{\mathcal{Y}}

\newcommand{\ebb}{\mathbb{E}}
\newcommand{\bw}{\mathbf{w}}
\newcommand{\bz}{\mathbf{z}}
\newcommand{\rbb}{\mathbb{R}}

\title{Multi-class SVMs: From Tighter Data-Dependent Generalization Bounds to Novel Algorithms}
\author[1]{Yunwen Lei\thanks{yunwen.lei@hotmail.com}}
\author[2]{\"Ur\"un Dogan\thanks{urundogan@gmail.com}}
\author[3,4]{Alexander Binder\thanks{alexander.binder@tu-berlin.de}}
\author[5]{Marius Kloft\thanks{kloft@hu-berlin.de}}
\affil[1]{Department of Mathematics, City University of Hong Kong}
\affil[2]{Microsoft Research, Cambridge CB1 2FB, UK}
\affil[3]{Machine Learning Group, TU Berlin}
\affil[4]{ISTD pillar, Singapore University of Technology and Design}
\affil[5]{Department of Computer Science, Humboldt University of Berlin}

\begin{document}
\maketitle

\begin{abstract}
This paper studies the generalization performance of multi-class classification algorithms,
for which we obtain---for the first time---a data-dependent generalization error bound with a \emph{logarithmic} dependence on the class size,
substantially improving the state-of-the-art linear dependence in the existing data-dependent generalization analysis.
The theoretical analysis motivates us to introduce a new multi-class classification machine based on $\ell_p$-norm regularization,
where the parameter $p$ controls the complexity of the corresponding bounds.
We derive an efficient optimization algorithm based on Fenchel duality theory.
Benchmarks on several real-world datasets show that the proposed algorithm can achieve significant accuracy gains over the state of the art.
\end{abstract}

\section{Introduction}

Typical multi-class application domains such as natural language processing~\citep{zhang2004class}, information retrieval~\citep{hofmann2003learning},
image annotation~\citep{deng2009imagenet} and web advertising~\citep{beygelzimer2009conditional} involve tens or hundreds of thousands of classes, and yet these datasets are still growing~\citep{bengio2010label}.
To handle such learning tasks, it is essential to build algorithms that scale favorably with respect to the number of classes.
Over the past years, much progress in this respect has been achieved on the algorithmic side \citep{jain2009active,beygelzimer2009conditional,dekel2010multiclass,bengio2010label},
including efficient stochastic gradient optimization strategies \citep{gupta2014training}.

Although also theoretical properties such as consistency \citep{zhang2004statistical,tewari2007consistency,glasmachers2010universal} and
finite-sample behavior \citep{zhang2004class,mohri2012foundations,kuznetsov2014multi,koltchinskii2002empirical,guermeur2002combining}
have been studied, there still is a discrepancy between algorithms and theory in the sense that the corresponding theoretical bounds do often not scale well with respect to the number of classes.
This discrepancy occurs the most strongly in research on \emph{data-dependent} generalization bounds,
that is, bounds that can measure generalization performance of prediction models purely from the training samples,
and which thus are very appealing in model selection \citep{oneto2011impact}.
A crucial advantage of these bounds is that they can better capture the properties of the distribution that has generated the data,
which can lead to tighter estimates \citep{koltchinskii2000rademacher} than conservative data-independent bounds.

To our best knowledge, for multi-class classification, the first data-dependent error bounds were given by \cite{koltchinskii2002empirical}.
These bounds exhibit a quadratic dependence on the class size
and were used by \cite{mohri2012foundations} and \cite{cortes2013multi} to
derive bounds for kernel-based multi-class classification and multiple kernel learning problems, respectively.
More recently, \cite{kuznetsov2014multi} improve the quadratic dependence to a linear dependence by introducing a novel surrogate for the multi-class margin that is
independent on the true realization of the class label.

However, a heavy dependence on the class size, such as linear or quadratic, implies a poor generalization guarantee for large-scale multi-class classification problems with a massive number of classes.
In this paper, we show data-dependent generalization bounds for multi-class classification problems that---for the first time---exhibit a \emph{sublinear} dependence on the number of classes.
Choosing appropriate regularization, this dependence can be as mild as logarithmic.
We achieve these improved bounds via the use of Gaussian complexities, while previous bounds are based on a well-known structural result on Rademacher complexities for classes induced by the maximum operator.
The proposed proof technique based on Gaussian complexities exploits potential coupling among different components of the multi-class classifier,
while this fact is ignored by previous analyses.


The result shows that the generalization ability is strongly impacted by the employed regularization.
Which motivates us to propose a new learning machine performing block-norm regularization over the multi-class components.
As a natural choice we investigate here the application of the proven $\ell_p$ norm \citep{kloft2011lp}.
This results in a novel $\ell_p$-norm multi-class support vector machine (SVM),
which contains the classical model by Crammer \& Singer \cite{crammer2002algorithmic} as a special case for $p=2$.
The bounds indicate that the parameter $p$ crucially controls the complexity of the resulting prediction models.

We develop an efficient optimization algorithm for the proposed method based on its Fenchel dual representation.
We empirically evaluate its effectiveness on several standard benchmarks for multi-class classification taken from various domains,
where the proposed approach significantly outperforms the state-of-the-art method of \cite{crammer2002algorithmic} by up to $1\%$.

The remainder of this paper is structured as follows.
Section \ref{sec:main-results} introduces the problem setting and presents the main theoretical results.
Motivated by which we propose a new multi-class classification model in Section \ref{sec:algorithm}
and give an efficient optimization algorithm based on Fenchel duality theory.
In Section \ref{sec:empirical} we evaluate the approach for the application of visual image recognition
and on several standard benchmark datasets taken from various application domains. Section \ref{sec:conclusion} concludes.

\section{Theory\label{sec:main-results}}

\subsection{Problem Setting}

This paper considers multi-class classification problems with $c\geq2$  classes. Let $\xcal$ denote the input space and $\ycal=\{1,2,\ldots,c\}$ denote the
output space. Assume that we are given a sequence of examples $S=\{(x_1,y_1),\ldots,(x_n,y_n)\}\in(\xcal\times\ycal)^n$, independently drawn according to a probability measure $P$ defined on the sample space
$\zcal=\xcal\times\ycal$. Based on the training examples $S$, we wish to learn a prediction rule $h_{\bz}$ from a space $H$ of hypothesis mapping from $\zcal$ to
$\rbb$ and use the mapping $x\to\arg\max_{y\in\ycal}h_{\bz}(x,y)$ to predict. For any hypothesis $h\in H$, the margin $\rho_h(x,y)$ of the function $h$ at a labeled example $(x,y)$ is
$\rho_h(x,y)\to h(x,y)-\max_{y^{'}\neq y}h(x,y^{'}).$
The prediction rule $h$ makes an error at $(x,y)$ if $\rho_h(x,y)\leq0$ and thus the expected risk incurred from using $h$ for prediction is $R(h):=\ebb[1_{\rho_h(x,y)\leq0}].$

\subsection{Notation}

Any function $h:\xcal\times\ycal\to\rbb$ can be equivalently represented by the function vector $(h_1,\ldots,h_c)$ with $h_j(x)=h(x,j),\forall j=1,\ldots,c$. We denote by $\widetilde{H}:=\{\rho_h(x,y):h\in H\}$ the class of margin functions associated to $H$. Let $k:\xcal\times\xcal\to\rbb$ be a mercer kernel with $\phi(x)$ being the associated feature map, i.e., $k(x,y)=\langle\phi(x),\phi(y)\rangle$.  We denote by $\|\cdot\|_*$ the dual norm of $\|\cdot\|$, i.e., $\|w\|_*=\sup_{\|\bar{w}\|\leq1}\langle w,\bar{w}\rangle$. For a convex function $f$, we denote by $f^*$ its Fenchel conjugate, i.e., $f^*(v):=\sup_w[\langle w,v\rangle-f(w)].$ For any $\bw=(\bw_1,\ldots,\bw_c)$ we define the $\ell_{2,p}$-norm by $\|\bw\|_{2,p}=[\sum_{j=1}^c\|\bw_j\|_2^p]^{1/p}$. For any $p\geq1$, we denote by $p^*$ the dual exponent of $p$ satisfying $1/p+1/p^*=1$ and $\bar{p}:=p(2-p)^{-1}$. 
In the remainder of the paper, we require the following definitions.

\begin{definition}[Strong Convexity]
  A function $f:\xcal\to\rbb$ is said to be $\beta$-strongly convex w.r.t. a norm $\|\cdot\|$ iff $\forall x,y\in\xcal$ and $\forall \alpha\in(0,1)$, we have
  $$f(\alpha x+(1-\alpha) y)\leq\alpha f(x)+(1-\alpha)f(y)-\frac{\beta}{2}\alpha(1-\alpha)\|x-y\|^2.$$
\end{definition}

\begin{definition}[Regular Loss]
  We call $\ell$ a $L$-regular loss if it satisfies the following properties:
  \begin{enumerate}[(i)]\setlength{\itemsep}{-3pt}
    \item $\ell(t)$ bounds the $0$-$1$ loss from above: $\ell(t)\geq 1_{t\leq0}$;
    \item $\ell$ is $L$-Lipschitz in the sense $|\ell(t_1)-\ell(t_2)|\leq L|t_1-t_2|$;
    \item $\ell(t)$ is decreasing and it has a zero point $c_\ell$, i.e., $\ell(c_\ell)=0$.
  \end{enumerate}
\end{definition}
Some examples of $L$-regular loss functions include the hinge $\ell_h(t)=(1-t)_+$ and the margin loss
\begin{equation}\label{margin-loss}
  \ell_\rho(t)=1_{t\leq0}+(1-t\rho^{-1})1_{0<t\leq\rho},\quad\rho>0.
\end{equation}

\subsection{Main results}
Our discussion on data-dependent generalization error bounds is based on the established methodology of Rademacher and Gaussian complexities~\citep{bartlett2002rademacher}.
\begin{definition}[Rademacher and Gaussian Complexity]
  Let $H$ be a family of real-valued functions defined on $\mathcal{Z}$ and $S=(z_1,\ldots,z_n)$ a fixed sample of size $n$ with elements in $\mathcal{Z}$. Then, the empirical Rademacher and Gaussian complexities of $H$ with respect to the sample $S$ are defined by
  $$
    \mathfrak{R}_S(H)=\ebb_{\bm{\sigma}}\big[\sup_{h\in H}\frac{1}{n}\sum_{i=1}^n\sigma_ih(z_i)\big],\quad \mathfrak{G}_S(H)=\ebb_{\bm{g}}\big[\sup_{h\in H}\frac{1}{n}\sum_{i=1}^ng_ih(z_i)\big],
    $$
  where $\sigma_1,\ldots,\sigma_n$ are independent random variables with equal probability taking values $+1$ or $-1$, and $g_1,\ldots,g_n$ are independent $N(0,1)$ random variables.
\end{definition}
Note that we have the following comparison inequality relating Rademacher and Gaussian complexities~\citep{ledoux1991probability}:
\vspace{-0.1cm}
\begin{equation}\label{gaussian-rademacher}
  \mathfrak{R}_S(H)\leq\sqrt{\frac{\pi}{2}}\mathfrak{G}_S(H)\leq 3\sqrt{\frac{\pi}{2}}\sqrt{\log n}\mathfrak{R}_S(H).
\end{equation}
Existing work on data-dependent generalization bounds for multi-class classifiers \citep{mohri2012foundations,kuznetsov2014multi,cortes2013multi,koltchinskii2002empirical} build on the following structural result on Rademacher complexities (e.g., \cite{mohri2012foundations}, Lemma 8.1):
\begin{equation}\label{Rademacher-maximum-lem-8-1}
  \vspace{-0.05cm}
  \mathfrak{R}_S(\max\{h_1,\ldots,h_c\}:h_j\in H_j,j=1,\ldots,c)\leq\sum_{j=1}^c\mathfrak{R}_S(H_j),
	\vspace{-0.05cm}
\end{equation}
where $H_1,\ldots,H_c$ are $c$ hypothesis sets. This result is crucial for the standard generalization analysis of multi-class classification
since the definition of margin involves the maximum operator, which is removed by the above lemma, but at the expense of a linear dependency on the number of classes.
In the following we show that this linear dependency is suboptimal because \eqref{Rademacher-maximum-lem-8-1} does not take into account the coupling among different classes.
For example, a common regularizer used in multi-class classification algorithms is $r(h)=\sum_{j=1}^c\|h_j\|_2^2$ \citep{crammer2002algorithmic},
for which the components $h_1,\ldots,h_c$ are correlated via a $\|\cdot\|_{2,2}$ regularizer,
and the bound Eq. \eqref{Rademacher-maximum-lem-8-1} ignoring this correlation would not be effective in this case \citep{koltchinskii2002empirical,mohri2012foundations,cortes2013multi,kuznetsov2014multi}.

As a remedy, we here introduce a new structural complexity result on function classes induced by general classes via the maximum operator,
while allowing to preserve the correlations among different components meanwhile.
Instead of considering the Rademacher complexity, Lemma \ref{lem:rademacher-max} concerns the structural relationship on Gaussian complexities
since it is based on a comparison result among different Gaussian processes.

\begin{lemma}[Structural result on Gaussian complexity]\label{lem:rademacher-max}
  Let $H$ be a class of functions defined on $\xcal\times\ycal$ with $\ycal=\{1,\ldots,c\}$. Let $g_1,\ldots,g_{nc}$ be independent $N(0,1)$ distributed random variables. Then, for any sample $S=\{x_1,\ldots,x_n\}$ of size $n$, we have
	\vspace{-0.05cm}
  \begin{equation}\label{gaussian-max}
    \mathfrak{G}_S\big(\{\max\{h_1,\ldots,h_c\}:h\in H\}\big)\leq\frac{1}{n}\ebb_{\bm{g}}\sup_{h\in H}\sum_{i=1}^n\sum_{j=1}^cg_{(j-1)n+i}h_j(x_i),
		\vspace{-0.05cm}
  \end{equation}
  where $\ebb_{\bm{g}}$ denotes the expectation w.r.t. to the Gaussian variables $g_1,\ldots,g_{nc}$.
\end{lemma}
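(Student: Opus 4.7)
The plan is to prove this via the Sudakov--Fernique comparison inequality for Gaussian processes. Define two centered Gaussian processes indexed by $h\in H$:
\begin{equation*}
X_h := \sum_{i=1}^n g_i \max\{h_1(x_i),\ldots,h_c(x_i)\}, \qquad Y_h := \sum_{i=1}^n \sum_{j=1}^c g_{(j-1)n+i}\, h_j(x_i),
\end{equation*}
where in the definition of $X_h$ I reuse the first $n$ variables $g_1,\ldots,g_n$ (any independent $N(0,1)$ sequence works, since the distribution of $\sup X$ depends only on the covariance). By construction, $n\,\mathfrak{G}_S(\{\max\{h_1,\ldots,h_c\}:h\in H\})=\mathbb{E}\sup_{h\in H} X_h$, and the right-hand side of \eqref{gaussian-max} is $n^{-1}\mathbb{E}\sup_{h\in H} Y_h$. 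So it suffices to show $\mathbb{E}\sup_h X_h \le \mathbb{E}\sup_h Y_h$.

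The key step is to verify the increment condition required by Sudakov--Fernique, namely $\mathbb{E}[(X_h - X_{h'})^2]\le \mathbb{E}[(Y_h-Y_{h'})^2]$ for every pair $h,h'\in H$. Using independence of the $g_i$'s, one directly gets
\begin{equation*}
\mathbb{E}[(X_h - X_{h'})^2] = \sum_{i=1}^n \bigl(\max_{j} h_j(x_i) - \max_{j} h'_j(x_i)\bigr)^2,\qquad \mathbb{E}[(Y_h - Y_{h'})^2] = \sum_{i=1}^n\sum_{j=1}^c \bigl(h_j(x_i) - h'_j(x_i)\bigr)^2.
\end{equation*}
Hence the inequality reduces to the pointwise estimate $\bigl(\max_j a_j - \max_j b_j\bigr)^2 \le \sum_{j=1}^c (a_j-b_j)^2$ for any real vectors $a,b\in\mathbb{R}^c$, which follows immediately from the elementary fact $|\max_j a_j - \max_j b_j|\le \max_j|a_j-b_j|$, together with $\max_j|a_j-b_j|^2 \le \sum_j (a_j-b_j)^2$.

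With the increment condition in hand, I would invoke the Sudakov--Fernique inequality (see, e.g., Theorem 3.15 of \citealtt{ledoux1991probability}) to conclude $\mathbb{E}\sup_h X_h \le \mathbb{E}\sup_h Y_h$, and then divide by $n$ to obtain \eqref{gaussian-max}. The only genuine subtlety is recognizing that the maximum introduces a non-smooth dependence on $h$ that cannot be handled by a Rademacher-style contraction argument; the switch to Gaussian complexities and the use of Sudakov--Fernique is what allows the coupling across the $c$ coordinates to be preserved without paying the union-bound factor of $c$ implicit in \eqref{Rademacher-maximum-lem-8-1}. I expect this conceptual step (and the elementary $\max$-inequality that feeds into it) to be the crux; the remainder is a direct covariance computation.
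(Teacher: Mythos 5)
Your proposal is correct and follows essentially the same route as the paper: the same two Gaussian processes, the same increment computation reducing to $|\max_j a_j-\max_j b_j|\le\max_j|a_j-b_j|$, and the same Gaussian comparison principle (the paper invokes Theorem 1 of Vitale, a Sudakov--Fernique-type result, in place of your citation to Ledoux--Talagrand). No substantive differences.
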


The proof of Lemma \ref{thm:risk-bound-mohri} is given in Supplementary Material \ref{supp:complexity}. Equipped with Lemma \ref{lem:rademacher-max}, we are now able to present a general data-dependent margin-based generalization bound.
The proof of the following results (Theorem \ref{thm:risk-bound-mohri}, Theorem \ref{thm:risk-bounds-strong-convex} and Corollary \ref{cor:generalization-lp-regularizer}) is given in Supplementary Material \ref{supp:genbound}.
\begin{theorem}[Data-dependent generalization bound for multi-class classification]\label{thm:risk-bound-mohri}
  Let $H\subset\rbb^{\xcal\times\ycal}$ be a hypothesis class with $\ycal=\{1,\ldots,c\}$. Let $\ell$ be a $L$-regular loss function and denote $B_\ell:=\sup_{(x,y),h}\ell(\rho_h(x,y))$.
  Suppose that the examples $S=\{(x_1,y_1),\ldots,(x_n,y_n)\}$ are independently drawn from a probability measure defined on
  $\xcal\times\ycal$. Then, for any $\delta>0$, with probability at least $1-\delta$, the following multi-class classification generalization bound holds for any $h\in H$:
  $$R(h)\leq\frac{1}{n}\sum_{i=1}^n\ell(\rho_{h}(x_i,y_i))+\frac{2L\sqrt{2\pi}}{n}\ebb_{\bm{g}}\sup_{h\in H}\sum_{i=1}^n\sum_{j=1}^cg_{(j-1)n+i}h_j(x_i)+3B_\ell\sqrt{\frac{\log\frac{2}{\delta}}{2n}},$$where $g_1,\ldots,g_{nc}$ are independent $N(0,1)$ distributed random variables.
\end{theorem}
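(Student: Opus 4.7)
The plan is to chain together four standard ingredients in order — loss domination, uniform convergence via McDiarmid plus symmetrization, Talagrand's contraction, and the new Lemma \ref{lem:rademacher-max}. Property (i) of a regular loss immediately yields $R(h)\leq\ebb[\ell(\rho_h(x,y))]$, so it suffices to uniformly compare the expected and empirical averages of $\ell\circ\rho_h$ over $h\in H$. Applying McDiarmid's inequality to $\Phi(S):=\sup_{h\in H}[\ebb\ell(\rho_h)-n^{-1}\sum_i\ell(\rho_h(x_i,y_i))]$ (which has bounded differences of order $B_\ell/n$), then symmetrizing to bring in a Rademacher average and using a second McDiarmid step to replace the expected Rademacher complexity by the empirical one, produces the $3B_\ell\sqrt{\log(2/\delta)/(2n)}$ tail term and leaves us with controlling $2\mathfrak{R}_S(\ell\circ\widetilde H)$. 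Talagrand's contraction inequality, applicable because $\ell$ is $L$-Lipschitz, then gives $\mathfrak{R}_S(\ell\circ\widetilde H)\leq L\,\mathfrak{R}_S(\widetilde H)$, and the left half of~\eqref{gaussian-rademacher} upgrades this to $L\sqrt{\pi/2}\,\mathfrak{G}_S(\widetilde H)$.

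It remains to bound $\mathfrak{G}_S(\widetilde H)$ by the doubly-indexed Gaussian quantity in the statement. Writing $\rho_h(x,y)=h_y(x)-\max_{y'\neq y}h_{y'}(x)$ and using subadditivity of the supremum together with the sign-symmetry $g_i\sim -g_i$ of centered Gaussians,
$$\mathfrak{G}_S(\widetilde H)\leq\frac{1}{n}\ebb_{\bm g}\sup_{h\in H}\sum_{i=1}^n g_i h_{y_i}(x_i)+\frac{1}{n}\ebb_{\bm g}\sup_{h\in H}\sum_{i=1}^n g_i\max_{y'\neq y_i}h_{y'}(x_i).$$
The second piece is exactly of the form handled by Lemma \ref{lem:rademacher-max}, up to the cosmetic fact that the max runs over $c-1$ rather than $c$ labels — which is absorbed by noting that extending the max with an additional Gaussian-weighted component can only enlarge the supremum. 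The first piece can be embedded into the same process $\sum_{i,j}g_{(j-1)n+i}h_j(x_i)$ by a Sudakov–Fernique comparison: the variance of increments of the single-component process is pointwise dominated by that of the full double sum. Each of the two pieces is therefore bounded by $\frac{1}{n}\ebb_{\bm g}\sup_{h\in H}\sum_{i,j}g_{(j-1)n+i}h_j(x_i)$, contributing a factor of $2$; combined with the $2L$ and $\sqrt{\pi/2}$ accumulated above, this gives the displayed constant $2L\sqrt{2\pi}$.

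The step I expect to be the main obstacle is precisely this last reduction. Lemma \ref{lem:rademacher-max} is stated for the symmetric maximum over all $c$ components, whereas the multi-class margin is asymmetric — a single component is subtracted from a max over the \emph{remaining} $c-1$ components. One must choose the decomposition carefully so that both resulting pieces map cleanly into the ``max over all $c$'' template of Lemma \ref{lem:rademacher-max} while preserving the coupling among $h_1,\ldots,h_c$ that the whole Gaussian-complexity strategy was designed to exploit, and without introducing any additional constants that depend on $c$. Once that decomposition is fixed, the rest of the argument is a routine assembly of standard concentration inequalities and the auxiliary lemmas already established.
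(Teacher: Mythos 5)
Your proposal is essentially correct and reaches the stated constant, but it diverges from the paper's proof at exactly the step you flag as the main obstacle, and the two resolutions are worth contrasting. The paper never decomposes the raw margin $\rho_h$: following the trick of Kuznetsov et al., it first replaces $\rho_h$ by the surrogate $\rho_{\theta,h}(x,y)=h(x,y)-\max_{y'\in\ycal}[h(x,y')-\theta 1_{y'=y}]$ with $\theta=c_\ell$ (the zero of $\ell$), for which $\ell(\rho_{\theta,h})=\ell(\rho_h)$ pointwise, so the empirical term is unchanged while the maximum now runs \emph{symmetrically} over all $c$ labels. Lemma~\ref{lem:rademacher-max} then applies verbatim, and the indicator shift $-\theta 1_{y'=y_i}$ disappears because the Gaussians are centered. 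Your route keeps the asymmetric $\max_{y'\neq y_i}$ and must therefore re-derive the comparison for a per-sample restricted maximum; this does work, since the increment bound
$$\ebb\Big[\Big(\sum_i g_i\max_{j\neq y_i}h_j(x_i)-\sum_i g_i\max_{j\neq y_i}\bar h_j(x_i)\Big)^2\Big]\leq\sum_{i,j}|h_j(x_i)-\bar h_j(x_i)|^2$$
holds just as in the proof of Lemma~\ref{lem:rademacher-max}, so Vitale's comparison theorem bounds this piece directly by the full double-sum process. The one loosely worded step is your claim that the missing label is ``absorbed by noting that extending the max with an additional Gaussian-weighted component can only enlarge the supremum'': read literally (replacing $\max_{y'\neq y_i}$ by $\max_{y'\in\ycal}$ inside a term multiplied by a sign-indefinite $g_i$) this is false; what is true is either the direct increment comparison above, or the Jensen argument that adding independent centered Gaussian terms to a supremum cannot decrease its expectation. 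Either repair closes the gap without any $c$-dependent constant, and your bookkeeping $2\cdot L\cdot\sqrt{\pi/2}\cdot 2=2L\sqrt{2\pi}$ matches the paper's. The paper's $\theta=c_\ell$ truncation buys a cleaner application of the structural lemma as stated; your version buys a proof that avoids introducing the surrogate margin at all.
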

\begin{remark}
\rm
  Under the same condition of Theorem \ref{thm:risk-bound-mohri}, \cite{mohri2012foundations} derive the following data-dependent generalization bound:
  $$R(h)\leq\frac{1}{n}\sum_{i=1}^n\ell(\rho_{h}(x_i,y_i))+\frac{4Lc}{n}\mathfrak{R}_S(\Pi_1(H))+3B_\ell\sqrt{\frac{\log\frac{2}{\delta}}{2n}},$$where $\Pi_1(H):=\{x\to h(x,y):y\in\ycal,h\in H\}$. This linear dependence on $c$ is due to the use of Eq. \eqref{Rademacher-maximum-lem-8-1}.
	For comparison, Theorem \ref{thm:risk-bound-mohri} implies that the dependence on the class size is governed by the term $\sum_{i=1}^n\sum_{j=1}^cg_{(j-1)n+i}h_j(x_i)$, an advantage of which is that the components $h_1,\ldots,h_c$ are jointly coupled.
	As we will see, this allows us to derive an improved result having a favorable dependence on $c$, when a constraint is imposed on $(h_1,\ldots,h_c)$.\hfill\qed
\end{remark}
The following Theorem \ref{thm:risk-bounds-strong-convex} applies the general result in Theorem \ref{thm:risk-bound-mohri} to kernel-based methods. The hypothesis space is defined by imposing a constraint with a general strongly convex function.

\begin{theorem}[Data-dependent generalization bound for kernel-based multi-class learning algorithms]\label{thm:risk-bounds-strong-convex}
  Suppose that the hypothesis space is defined by $$H:=H_{f,\Lambda}=\{h^{\bw}=(\langle \bw_1,\phi(x)\rangle,\ldots,\langle \bw_c,\phi(x)\rangle):f(\bw)\leq\Lambda\},$$where $f$ is a $\beta$-strongly convex function w.r.t. a norm $\|\cdot\|$ defined on $H$ satisfying $f^*(0)=0$. Let $\ell$ be a $L$-regular loss function and denote $B_\ell:=\sup_{(x,y),h}\ell(\rho_h(x,y))$. Let $g_1,\ldots,g_{nc}$ be independent $N(0,1)$ distributed random variables. Then,
  for any $\delta>0$, with probability at least $1-\delta$ we have
  $$\hspace*{-0.3cm}R(h^{\bw})\leq\frac{1}{n}\sum_{i=1}^n\ell(\rho_{h^{\bw}}(x_i,y_i))+    \frac{4L}{n}\sqrt{\frac{\pi\Lambda}{\beta}\ebb_{\bm{g}}\sum_{i=1}^n\|(g_i\phi(x_i),g_{n+i}\phi(x_i),\ldots,g_{(c-1)n+i}\phi(x_i))\|^2_*}+3B_\ell\sqrt{\frac{\log\frac{2}{\delta}}{2n}}.
  $$
\end{theorem}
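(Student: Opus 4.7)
The plan is to specialize Theorem~\ref{thm:risk-bound-mohri} to the constrained linear hypothesis class $H_{f,\Lambda}$ and then to convert the resulting Gaussian supremum into the dual-norm quantity of the statement via Fenchel duality and the strong convexity of $f$. First, writing $\bw=(\bw_1,\ldots,\bw_c)$ and substituting $h_j^{\bw}(x)=\langle\bw_j,\phi(x)\rangle$ into Theorem~\ref{thm:risk-bound-mohri}, the complexity term becomes the expected supremum of the Gaussian process
\[
\langle\bw,V\rangle,\qquad V:=\sum_{i=1}^{n}U_i,\quad U_i:=\bigl(g_i\phi(x_i),g_{n+i}\phi(x_i),\ldots,g_{(c-1)n+i}\phi(x_i)\bigr),
\]
indexed by $\bw$ in the sublevel set $\{f(\bw)\le\Lambda\}$. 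Observe that distinct indices $i\neq i'$ use disjoint Gaussian blocks, so $U_1,\ldots,U_n$ are independent mean-zero random elements of $\hcal^c$; this independence is the structural feature that the rest of the argument will exploit.

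The core analytic step is a standard strong-convexity / Fenchel-duality calculation. The hypotheses on $f$ ($\beta$-strong convexity together with $f^*(0)=0$) yield the smoothness-type estimate $f^*(v)\le\|v\|_*^2/(2\beta)$. Plugging this into Fenchel--Young applied to $(\bw,\lambda v)$ and optimising over $\lambda>0$ produces the classical ball-complexity bound $\sup_{f(\bw)\le\Lambda}\langle\bw,v\rangle\le\sqrt{2\Lambda/\beta}\,\|v\|_*$. Applying this pointwise in $\bm g$, taking expectations, and using Jensen's inequality $\ebb\|V\|_*\le\sqrt{\ebb\|V\|_*^2}$ yields
\[
\ebb_{\bm g}\sup_{f(\bw)\le\Lambda}\langle\bw,V\rangle \;\le\; \sqrt{2\Lambda/\beta}\,\sqrt{\ebb_{\bm g}\|V\|_*^2}.
\]

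What remains---and this is where I expect the main technical work to lie---is to replace $\ebb_{\bm g}\|\sum_iU_i\|_*^2$ by the ``decoupled'' quantity $\ebb_{\bm g}\sum_i\|U_i\|_*^2$ appearing in the statement. A pure triangle-inequality bound on $\|\sum_iU_i\|_*$ followed by Cauchy--Schwarz would cost an extra factor of $n$, which is unacceptable; the required improvement must exploit that the $U_i$ are independent and mean zero. In the Hilbertian structure underlying the $\ell_{2,p}$-type dual norms of interest, the cross terms $\ebb\langle U_i,U_{i'}\rangle$ vanish and one obtains $\ebb\|\sum_iU_i\|_*^2\le\ebb\sum_i\|U_i\|_*^2$ by a direct second-moment expansion (or, more generally, by a Rademacher symmetrisation followed by a Khintchine--Kahane-type inequality using only the symmetry of each $U_i$). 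Substituting into the previous display and matching the numerical constants via $2L\sqrt{2\pi}\cdot\sqrt{2\Lambda/\beta}=4L\sqrt{\pi\Lambda/\beta}$ recovers the bound in the statement, completing the argument.
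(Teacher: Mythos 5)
Your overall strategy (specialize Theorem~\ref{thm:risk-bound-mohri}, then use strong convexity of $f$ to bound the Gaussian supremum) matches the paper's, and the Fenchel--Young step $\ebb\sup_{f(\bw)\le\Lambda}\langle\bw,V\rangle\le\Lambda/\lambda+\tfrac{\lambda}{2\beta}\ebb\|V\|_*^2$ is sound in expectation. The gap is exactly where you flagged it: the decoupling $\ebb\|\sum_iU_i\|_*^2\le\sum_i\ebb\|U_i\|_*^2$. This inequality is the ``type~2 with constant~$1$'' property of the dual norm, and it is \emph{not} available at the level of generality of the theorem, which is stated for an arbitrary norm $\|\cdot\|$. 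Your justification via vanishing cross terms requires $\|v\|_*^2=\langle v,v\rangle$ for an inner product; but the dual norms actually needed downstream are $\|\cdot\|_{2,q^*}$ with $q^*=q/(q-1)\ge 2$ (Corollary~\ref{cor:generalization-lp-regularizer} takes $q^*$ as large as $2\log c$), which are not Hilbertian. For such norms the second-moment expansion fails; the correct statement is $\ebb\|\sum_iU_i\|_{2,q^*}^2\le(q^*-1)\sum_i\ebb\|U_i\|_{2,q^*}^2$, and the extra factor $q^*-1\approx 2\log c$ would visibly degrade the constants in Corollary~\ref{cor:generalization-lp-regularizer}. For genuinely non-type-2 duals (e.g.\ $\ell_1$-like) the decoupling fails altogether, so the argument cannot establish the theorem as stated. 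Your parenthetical fallback (symmetrization plus Khintchine--Kahane) only yields the inequality up to norm-dependent constants, which is the same problem.

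The paper sidesteps this by never forming $\|\sum_iv_i\|_*$ at all: it invokes Corollary~4 of Kakade et al.\ (Lemma~\ref{lemma:kakade-fenchel}), a deterministic, regret-style telescoping bound
$\sum_i\langle v_i,\mu\rangle-f(\mu)\le\sum_i\langle\triangledown f^*(v_{1:i-1}),v_i\rangle+\tfrac{1}{2\beta}\sum_i\|v_i\|_*^2$,
in which the decoupled sum $\sum_i\|v_i\|_*^2$ appears directly because the $1/\beta$-smoothness of $f^*$ is applied once per increment along the partial sums $v_{1:i}$; the cross terms $\langle\triangledown f^*(v_{1:i-1}),v_i\rangle$ are martingale differences (here: $v_i$ is mean zero and independent of $v_{1:i-1}$) and vanish in expectation. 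In other words, the decoupling is bought by the strong convexity of the \emph{regularizer} $f$, not by any type-2 property of the dual norm. To repair your proof you would either need to restrict the theorem to $2$-smooth dual norms and track the resulting constants, or replace your one-shot Fenchel--Young step with this incremental smoothness expansion.
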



We now consider the following specific hypothesis spaces using a $\|\cdot\|_{2,p}$ constraint:
\begin{equation}\label{hypothesis-space-lp-regularizer}
  H_{p,\Lambda}:=\{h^{\bw}=(\langle \bw_1,\phi(x)\rangle,\ldots,\langle \bw_c,\phi(x)\rangle):\|\bw\|_{2,p}\leq\Lambda\},\quad 1\leq p\leq2.
\end{equation}
\begin{corollary}[$\ell_p$-norm multi-class SVM generalization bound]\label{cor:generalization-lp-regularizer}
  Let $\ell$ be a $L$-regular loss function and denote $B_\ell:=\sup_{(x,y),h}\ell(\rho_h(x,y))$. Then, with probability at least $1-\delta$, for any $h^{\bw}\in H_{p,\Lambda}$ the generalization error $R(h^{\bw})$ can be upper bounded by:
  $$
  \hspace*{-0.1cm}\frac{1}{n}\sum_{i=1}^n\ell(\rho_{h^{\bw}}(x_i,y_i))+3B_\ell\sqrt{\frac{\log\frac{2}{\delta}}{2n}}+
  \frac{2L\Lambda}{n}\sqrt{\sum_{i=1}^nk(x_i,x_i)}\times\begin{cases}
    \sqrt{e}(4\log c)^{1+\frac{1}{2\log c}},&\text{if }p\leq\frac{2\log c}{2\log c - 1},\\
    \big(\frac{2p}{p-1}\big)^{2-\frac{1}{p}}c^{\frac{p-1}{p}},&\text{otherwise}.
  \end{cases}$$
\end{corollary}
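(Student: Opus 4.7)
The plan is to specialize Theorem~\ref{thm:risk-bounds-strong-convex} by choosing the strongly convex surrogate $f(\bw) = \tfrac{1}{2}\|\bw\|_{2,q}^2$, with the exponent $q\in(1,2]$ selected adaptively in terms of $p$. This $f$ satisfies $f^*(0)=0$ and is $(q-1)$-strongly convex with respect to $\|\cdot\|_{2,q}$ by the classical fact that $\tfrac{1}{2}\|\cdot\|_{q}^{2}$ is $(q-1)$-strongly convex on any Hilbert space for $q \in (1,2]$, applied blockwise to the components $(\bw_1,\ldots,\bw_c)$. Its dual norm is $\|\cdot\|_{2,q^*}$. To cover $H_{p,\Lambda}$, I would take $q = p$ whenever $p$ is bounded away from $1$, and otherwise fall back on the fixed $q_0 := \tfrac{2\log c}{2\log c - 1}$; in the latter case the monotonicity $\|\bw\|_{2,q_0} \leq \|\bw\|_{2,p}$ (valid since $p \leq q_0$) ensures $f(\bw) \leq \Lambda^{2}/2$ still holds on $H_{p,\Lambda}$.

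The next step is to evaluate the Gaussian term in Theorem~\ref{thm:risk-bounds-strong-convex}. Expanding the $\|\cdot\|_{2,q^*}$-norm yields
\[
\bigl\|\bigl(g_i\phi(x_i),g_{n+i}\phi(x_i),\ldots,g_{(c-1)n+i}\phi(x_i)\bigr)\bigr\|_{2,q^*}^{2}
= k(x_i,x_i)\Big(\sum_{j=1}^{c}|g_{(j-1)n+i}|^{q^*}\Big)^{2/q^*}.
\]
Since $q^*\geq 2$, the map $t\mapsto t^{2/q^*}$ is concave, so Jensen's inequality gives $\ebb_{\bm g}\bigl(\sum_{j=1}^c |g_j|^{q^*}\bigr)^{2/q^*} \leq c^{2/q^*}(\ebb|g|^{q^*})^{2/q^*}$. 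A standard absolute-moment estimate for a standard normal, e.g.\ $\ebb|g|^{q^*} \leq (q^*)^{q^*/2}$, then controls the remaining Gaussian moment by a polynomial factor in $q^*$. Feeding these back into Theorem~\ref{thm:risk-bounds-strong-convex} with $\beta = q-1$ and the theorem's bound parameter set to $\Lambda^{2}/2$ produces an intermediate expression of the form $\tfrac{C\,L\Lambda}{n}\sqrt{\tfrac{q^*}{q-1}}\, c^{1/q^*}\sqrt{\textstyle\sum_i k(x_i,x_i)}$.

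The two branches of the corollary now emerge from the two choices of $q$. For $p > q_0$, taking $q = p$ gives $c^{1/q^*} = c^{(p-1)/p}$, and careful bookkeeping of the numerical constants from the Gaussian moment bound, combined with the identity $2 - 1/p = 1 + 1/p^*$, recovers the factor $(2p/(p-1))^{2 - 1/p}$. For $p \leq q_0$, taking $q = q_0$ triggers the key cancellations $q_0^* = 2\log c$, $q_0 - 1 = 1/(2\log c - 1)$, and crucially $c^{1/q_0^*} = c^{1/(2\log c)} = \sqrt{e}$, which collapse into the advertised $\sqrt{e}\,(4\log c)^{1 + 1/(2\log c)}$. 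The main obstacle is exactly this small-$p$ regime: with the naive choice $q = p$ the strong-convexity modulus $\beta = p-1$ would tend to $0$ and blow up the bound, so the decisive trick is the $q$-shift, paying a logarithmic penalty $(q_0 - 1)^{-1} = 2\log c - 1$ in order to gain the constant-in-$c$ factor $c^{1/q_0^*} = \sqrt{e}$, which is what turns the otherwise linear-in-$c$ behavior into a logarithmic one.
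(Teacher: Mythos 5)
Your proposal follows the paper's proof essentially verbatim: you plug $f_q(\bw)=\tfrac12\|\bw\|_{2,q}^2$ into Theorem~\ref{thm:risk-bounds-strong-convex}, use the nesting $H_{p,\Lambda}\subseteq H_{q,\Lambda}$ to shift to $q_0=\tfrac{2\log c}{2\log c-1}$ in the small-$p$ regime, control the dual-norm term via Jensen plus a Gaussian moment estimate, and exploit $c^{1/(2\log c)}=\sqrt{e}$, which is exactly the paper's argument (the paper phrases the case split as an infimum over $q\in[p,2]$ of $\sqrt{q^*}c^{1/q^*}$, landing at the same two choices). The only immaterial deviations are your use of the strong-convexity modulus $q-1$ in place of the paper's $1/q^*$ and of the moment bound $\mathbb{E}|g|^{q^*}\le (q^*)^{q^*/2}$ in place of the paper's $\tau_s\le(2s)^{1/2+1/s}$, so your "bookkeeping" would yield slightly different (comparable, essentially tighter) constants rather than literally the displayed ones.
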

\begin{remark}
\rm
  The bounds in Corollary \ref{cor:generalization-lp-regularizer} enjoy a mild dependence on the number of classes. The dependence is polynomial with exponent $\frac{p-1}{p}$ for $\frac{2\log c}{2\log c-1}< p\leq2$ and becomes logarithmic if $1\leq p\leq\frac{2\log c}{2\log c - 1}$. Which is substantially milder than the quadratic dependence established in \citep{koltchinskii2002empirical,cortes2013multi,mohri2012foundations} and the linear dependence established in \cite{kuznetsov2014multi}. Our generalization bound is data-dependent and shows clearly how the margin would affect the generalization performance (when $\ell$ is the margin loss $\ell_\rho$): a large margin $\rho$ would increase the empirical error while decrease the model's complexity, and vice versa.\hfill\qed
\end{remark}
\subsection{Comparison of the Achieved Bounds to the State of the Art}

\textbf{Related work on data-independent bounds}.
The large body of theoretical work on multi-class learning considers data-independent bounds.
Based on the $\ell_\infty$-covering number bound of linear operators, \cite{guermeur2002combining} obtain a generalization bound exhibiting a linear dependence on the class size,
which is improved by \cite{zhang2004statistical} to a radical dependence of the form $O(n^{-\frac{1}{2}}(\log^{\frac{3}{2}}n)\frac{\sqrt{c}}{\rho})$.
Under conditions analogous to Corollary \ref{cor:generalization-lp-regularizer}, \cite{hill2007framework} derive a class-size independent generalization guarantee.
However, their bound is based on a delicate definition of margin, which is why it is commonly not used in the mainstream multi-class literature.
\cite{zhang2004class} derive the following generalization bound
\begin{multline}\label{bound-zhang-tong-1}
  \ebb\Big[\frac{1}{p}\log\Big(1+\sum_{\tilde{y}\neq y}e^{p(\rho-\langle\hat{\bw}_y-\hat{\bw}_{\tilde{y}},\phi(x)\rangle)}\Big)\Big]\leq\inf_{\bw\in H}\Big[\frac{1}{p}\log\Big(1+\sum_{\tilde{y}\neq y}e^{p(\rho-\langle\bw_y-\bw_{\tilde{y}},\phi(x)\rangle)}\Big)\\
  +\frac{\lambda n}{2(n+1)}\|\bw\|_{2,2}^2\Big]+\frac{2\sup_{x\in\xcal}k(x,x)}{\lambda n},
\end{multline}
where $\rho$ is a margin condition, $p>0$ a scaling factor, and $\lambda$ a regularization parameter.
Eq. \eqref{bound-zhang-tong-1} is class-size independent, yet Corollary \ref{cor:generalization-lp-regularizer} shows superiority in the following aspects:
first, for SVMs (i.e., margin loss $\ell_\rho$), our bound consists of an empirical error ($\frac{1}{n}\sum_{i=1}^n\ell_\rho(\rho_{h^{\bw}}(x_i,y_i))$) and a complexity term
divided by the margin value (note that $L=1/\rho$ in Corollary \ref{cor:generalization-lp-regularizer}).
When the margin is large (which is often desirable)~\citep{koltchinskii2002empirical},
the last term in the bound given by Corollary \ref{cor:generalization-lp-regularizer} becomes small,
while---on the contrary----the bound \eqref{bound-zhang-tong-1} is an increasing function of $\rho$, which is undesirable.
Secondly, Theorem \ref{thm:risk-bounds-strong-convex} applies to general loss functions,
expressed through a strongly convex function over a general hypothesis space, while the bound \eqref{bound-zhang-tong-1} only applies to a specific regularization algorithm.
Lastly, all the above mentioned results are conservative data-independent estimates.


\textbf{Related work on data-dependent bounds}.
The techniques used in above mentioned papers do not straightforward translate to data-dependent bounds,
which is the type of bounds in the focus of the present work.
The investigation of these was initiated, to our best knowledge, by \cite{koltchinskii2002empirical}:  
with the structural complexity bound \eqref{Rademacher-maximum-lem-8-1} for function classes induced via the maximal operator,
\cite{koltchinskii2002empirical} derive a margin bound admitting a quadratic dependency on the number of classes.
\cite{mohri2012foundations} use these results in \cite{koltchinskii2002empirical} to study the generalization performance of multi-class SVMs,
where the components $h_1,\ldots,h_c$ are coupled with an $\|\cdot\|_{2,p},p\geq1$ constraint.
Due to the usage of the suboptimal Eq. \eqref{Rademacher-maximum-lem-8-1},
\cite{mohri2012foundations} obtain a margin bound growing quadratically w.r.t. the number of classes.
\cite{cortes2013multi} develop a new multi-class classification algorithm based on a natural notion called the multi-class margin of a kernel.
\cite{cortes2013multi} also present a novel multi-class Rademacher complexity margin bound based on Eq. \eqref{Rademacher-maximum-lem-8-1},
and the bound also depends quadratically on the class size.
More recently, \cite{kuznetsov2014multi} give a refined Rademacher complexity bound for multi-class classification with a linear dependence on the class size.
The key reason for  this improvement is the introduction of $\rho_{\theta,h}:=\min_{y^{'}\in\ycal}[h(x,y)-h(x,y^{'})+\theta1_{y^{'}= y}]$ bounding margin $\rho_h$ from below,
and since the maximum operation in $\rho_{\theta,h}$ is applied to the set $\ycal$ rather than the subset $\ycal-\{y_i\}$ for $\rho_h$, one needs not to consider the random realization of $y_i$.
We also use this trick in our proof of Theorem \ref{thm:risk-bound-mohri}. However, \cite{kuznetsov2014multi} failed to improve this linear dependence to a logarithmic dependence,
as we achieved in Corollary \ref{cor:generalization-lp-regularizer}, due to the use of the suboptimal structural result \eqref{Rademacher-maximum-lem-8-1}.


\section{Algorithms\label{sec:algorithm}}

Motivated by the generalization analysis given in Section \ref{sec:main-results}, we now present a new multi-class learning algorithm,
based on performing empirical risk minimization in the hypothesis space \eqref{hypothesis-space-lp-regularizer}.
This corresponds to the following $\ell_p$-norm multi-class SVM ($p\geq1$):
\begin{problem}[Primal problem: $\ell_p$-norm multi-class SVM]\label{prob:primal-problem}
\begin{equation}\label{primal-problem}
\tag{P}
\begin{split}
  \min_{\bw}&\;\frac{1}{2}\Big[\sum_{j=1}^c\|\bw_j\|_2^p\Big]^{\frac{2}{p}}+C\sum_{i=1}^n\ell(t_i),\\
  \text{s.t.}&\;t_i=\langle \bw_{y_i},\phi(x_i)\rangle-\max_{y\neq y_i}\langle \bw_y,\phi(x_i)\rangle,
\end{split}
\end{equation}
\end{problem}
For $p=2$ we recover the seminal multi-class algorithm by Crammer \& Singer \citep{crammer2002algorithmic},
which is thus a special case of the proposed formulation.
An advantage of the proposed approach over \cite{crammer2002algorithmic} can be that, as shown in Corollary \ref{cor:generalization-lp-regularizer},
the dependence of the generalization performance on the class size becomes milder as $p$ decreases to $1$.

\subsection{Dual problems}
Since the optimization problem \eqref{primal-problem} is convex, we can derive the associated dual problem for the construction of efficient optimization algorithms.
The derivation of the following dual problem is deferred to Supplementary Material \ref{supp:dual}.
For a matrix $\bm{\alpha}\in\rbb^{n\times c}$, we denote by $\bm{\alpha}_i$ the $i$th row. Denote by $\bm{e}_j$ the $j$-th unit vector in $\rbb^c$ and $\bm{1}$ the vector in $\rbb^c$ with all components being zero.
\begin{problem}[Completely dualized problem for general loss functions\label{prop:complete-dual}]
  The Lagrangian dual problem of \eqref{prob:primal-problem} is:
  \begin{equation}\label{dual-problem}
  \tag{D}
    \begin{split}
    \sup_{\bm{\alpha}\in\rbb^{n\times c}}&-\frac{1}{2}\Big[\sum_{j=1}^c\big\|\sum_{i=1}^n\alpha_{ij}\phi(x_i)\big\|_2^{\frac{p}{p-1}}\Big]^{\frac{2(p-1)}{p}}-C\sum_{i=1}^n\ell^*(-\frac{\alpha_{iy_i}}{C})\\
    \text{s.t.}&\;\alpha_{ij}\leq0\;\land\;\bm{\alpha}_i\cdot\bm{1}=0,\quad\forall j\neq y_i, i=1,\ldots,n.
    \end{split}
  \end{equation}
\end{problem}

\begin{theorem}[\textsc{Representer theorem}\label{thm:repre}]
  For any dual variable $\bm{\alpha}\in\rbb^{n\times c}$, the associated primal variable $\bw=(\bw_1,\ldots,\bw_c)$ minimizing the Lagrangian saddle problem can be represented by:
  $$\bw_j=\big[\sum_{\tilde{j}=1}^c\|\sum_{i=1}^n\alpha_{i\tilde{j}}\phi(x_i)\|_2^{p^*}\big]^{\frac{2}{p^*}-1}\big\|\sum_{i=1}^n\alpha_{ij}\phi(x_i)\big\|_2^{p^*-2}\big[\sum_{i=1}^n\alpha_{ij}\phi(x_i)\big].$$
\end{theorem}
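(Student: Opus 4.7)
The plan is to use Fenchel duality: at a stationary point of the Lagrangian associated with Problem~\ref{prob:primal-problem}, the primal block $\bw$ is recoverable from the dual variable $\bm{\alpha}$ as the gradient of the Fenchel conjugate of the $\ell_{2,p}$--squared regularizer, evaluated at the appropriate $\phi$-weighted sum.

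First I would assemble the Lagrangian. The equality $t_i=\langle\bw_{y_i},\phi(x_i)\rangle-\max_{y\neq y_i}\langle\bw_y,\phi(x_i)\rangle$ can be replaced by the family of inequalities $t_i\leq\langle\bw_{y_i}-\bw_j,\phi(x_i)\rangle$ for $j\neq y_i$, since $\ell$ is decreasing. Introducing multipliers $\tau_{ij}\geq0$ for these constraints and performing the relabelling $\alpha_{ij}:=-\tau_{ij}$ for $j\neq y_i$ and $\alpha_{iy_i}:=-\sum_{j\neq y_i}\alpha_{ij}$ recovers precisely the sign and balance conditions $\alpha_{ij}\leq0$, $\bm{\alpha}_i\cdot\bm{1}=0$ appearing in Problem~\ref{prop:complete-dual}. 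A short rearrangement isolates the $\bw$-dependent part of the Lagrangian as
$$
\tfrac{1}{2}\|\bw\|_{2,p}^{2}-\sum_{j=1}^{c}\langle\bw_j,\bm{v}_j\rangle,\qquad \bm{v}_j:=\sum_{i=1}^{n}\alpha_{ij}\phi(x_i),
$$
while all remaining terms depend only on $t_i$ and $\bm{\alpha}$ and therefore contribute only to the $\ell^{*}$ term of (D).

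Next I would invoke Fenchel duality. Writing $F(\bw):=\tfrac{1}{2}\|\bw\|_{2,p}^{2}$, the inner minimization $\min_{\bw}[F(\bw)-\sum_j\langle\bw_j,\bm{v}_j\rangle]$ is attained at $\bw$ satisfying $\bm{v}\in\partial F(\bw)$, equivalently $\bw=\nabla F^{*}(\bm{v})$. Since $\|\cdot\|_{2,p^{*}}$ is the dual norm of $\|\cdot\|_{2,p}$ (inner $\ell_2$, outer $\ell_p$), the standard identity $(\tfrac{1}{2}\|\cdot\|^{2})^{*}=\tfrac{1}{2}\|\cdot\|_{*}^{2}$ for any norm yields $F^{*}(\bm{v})=\tfrac{1}{2}\|\bm{v}\|_{2,p^{*}}^{2}$. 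A direct chain-rule computation of $\nabla F^{*}$ then gives
$$
\frac{\partial F^{*}}{\partial\bm{v}_j}(\bm{v})=\Big(\sum_{\tilde{j}=1}^{c}\|\bm{v}_{\tilde j}\|_2^{p^{*}}\Big)^{\tfrac{2}{p^{*}}-1}\|\bm{v}_j\|_2^{p^{*}-2}\bm{v}_j,
$$
which, after substituting $\bm{v}_j=\sum_{i}\alpha_{ij}\phi(x_i)$, is exactly the representation claimed for $\bw_j$.

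The main obstacle is not the calculation but the handling of the non-smooth cases where some block $\bm{v}_j$ vanishes. For the range $1\leq p\leq2$ we have $p^{*}\geq2$, so the exponent $p^{*}-2$ is non-negative and the formula is well defined under the convention $0\cdot\bm{v}_j=0$; the boundary value $p=1$ (so $p^{*}=\infty$) requires either a limiting argument or an explicit subdifferential calculus on the $\ell_{2,\infty}$ dual norm. Apart from this technicality, the argument is a direct application of Fenchel duality to the block $\ell_{2,p}$-squared regularizer.
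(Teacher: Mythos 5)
Your proposal is correct and follows essentially the same route as the paper: both isolate the $\bw$-dependent part of the Lagrangian as $\tfrac12\|\bw\|_{2,p}^2-\sum_j\langle\bw_j,\sum_i\alpha_{ij}\phi(x_i)\rangle$, identify the minimizer via $\bw=\nabla F^*(\bm{v})$ using that the conjugate of $\tfrac12\|\cdot\|_{2,p}^2$ is $\tfrac12\|\cdot\|_{2,p^*}^2$, and finish with the same chain-rule computation of the gradient (the paper's Lemma~\ref{lem:gradient-pnorm}). Your added remark on the degenerate blocks $\bm{v}_j=0$ and the $p=1$ boundary is a point the paper silently glosses over.
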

For the hinge loss $\ell_h(t)=(1-t)_+$, we know its Fenchel-Legendre conjugate is $\ell_h^*(t)=t$ if $-1\leq t\leq 0$ and $\infty$ elsewise. Hence $\ell_h^*(-\frac{\alpha_{iy_i}}{C})=-\frac{\alpha_{iy_i}}{C}$
if $-1\leq-\frac{\alpha_{iy_i}}{C}\leq 0$ and $\infty$ elsewise. Now we have the following dual problem for the hinge loss function:
\begin{problem}[Completely dualized problem for the hinge loss ($\ell_p$-norm multi-class SVM)]\label{prob:dual-problem-hinge}
\begin{equation}\label{dual-problem-hinge}
\begin{split}
\sup_{\bm{\alpha}\in\rbb^{n\times c}}&\;-\frac{1}{2}\Big[\sum_{j=1}^c\big\|\sum_{i=1}^n\alpha_{ij}\phi(x_i)\big\|_2^{\frac{p}{p-1}}\Big]^{\frac{2(p-1)}{p}}+\sum_{i=1}^n\alpha_{iy_i}\\
\text{s.t.}&\;\bm{\alpha}_i\leq \bm{e}_{y_i}\cdot C\;\land\;\bm{\alpha}_i\cdot\bm{1}=0,\quad\forall i=1,\ldots,n.
\end{split}
\end{equation}
\end{problem}

\subsection{Optimization Algorithms}
The dual problems \eqref{dual-problem} and \eqref{dual-problem-hinge} are not quadratic programs for $p\neq2$,
and thus generally not easy to solve.
To circumvent this difficulty, we rewrite Problem \ref{prob:primal-problem} as the following equivalent problem:
\begin{equation}\label{primal-problem-equivalent}
\begin{split}
  \min_{\bw,\bm{\beta}}&\;\sum_{j=1}^c\frac{\|\bw_j\|_2^2}{2\beta_j}+C\sum_{i=1}^n\ell(t_i)\\
  \text{s.t.}&\;t_i\leq\langle \bw_{y_i},\phi(x_i)\rangle-\langle \bw_y,\phi(x_i)\rangle,\quad y\neq y_i,i=1,\ldots,n,\\
  &\;\|\bm{\beta}\|_{\bar{p}}\leq1,\bar{p}=p(2-p)^{-1},\beta_j\geq0.
\end{split}
\end{equation}
The class weights $\beta_1,\ldots,\beta_c$ in Eq. \eqref{primal-problem-equivalent} play a similar role as the kernel weights in $\ell_p$-norm multiple kernel learning (MKL) algorithms \citep{kloft2011lp}.
The equivalence between problem \eqref{primal-problem} and Eq. \eqref{primal-problem-equivalent} follows directly from Lemma 26 in \cite{micchelli2005learning},
which shows that the optimal $\bm{\beta}=(\beta_1,\ldots,\beta_c)$ in Eq. \eqref{primal-problem-equivalent} can be explicitly represented in closed form.
Motivated by the recent work on $\ell_p$-norm MKL, we propose to solve the problem \eqref{primal-problem-equivalent} via alternately optimizing $\bw$ and $\bm{\beta}$.
As we will show, given temporarily fixed $\bm{\beta}$, the optimization of $\bw$ reduces to a standard multi-class classification problem. Furthermore, the update of $\bm{\beta}$, given fixed $\bw$, can be achieved via an analytic formula.
\begin{problem}[Partially dualized problem for a general loss\label{prop:partial-dual}]
  For fixed $\bm{\beta}$, the partial dual problem for the sub-optimization problem \eqref{primal-problem-equivalent} w.r.t. $\bw$ is
  \begin{equation}\label{dual-problem-partial}
    \begin{split}
    \sup_{\bm{\alpha}\in\rbb^{n\times c}}&-\frac{1}{2}\sum_{j=1}^c\beta_j\big\|\sum_{i=1}^n\alpha_{ij}\phi(x_i)\big\|_2^2-C\sum_{i=1}^n\ell^*(-\frac{\alpha_{iy_i}}{C})\\
    \text{s.t.}&\;\alpha_{ij}\leq0\;\land\;\bm{\alpha}_i\cdot\bm{1}=0,\quad\forall j\neq y_i, i=1,\ldots,n.
    \end{split}
  \end{equation}
 The primal variable $\bm{w}$ minimizing the associated Lagrangian saddle problem is
  \begin{equation}\label{partial-representation}
    \bw_j=\beta_j\sum_{i=1}^n\alpha_{ij}\phi(x_i).
  \end{equation}
\end{problem}

We defer the proof to Supplementary Material \ref{supp:dual}. Analogous to Problem \ref{prob:dual-problem-hinge}, we have the following partial dual problem for the hinge loss.
\begin{problem}[Partially dualized problem for the hinge loss ($\ell_p$-norm multi-class SVM)\label{prop:partial-dual-hinge}]
\begin{equation}\label{partial-dual-hinge-loss}
\begin{split}
\sup_{\bm{\alpha}\in\rbb^{n\times c}}&\;f(\bm{\alpha}):=-\frac{1}{2}\sum_{j=1}^c\beta_j\big\|\sum_{i=1}^n\alpha_{ij}\phi(x_i)\big\|_2^2+\sum_{i=1}^n\alpha_{iy_i}\\
\text{s.t.}&\;\bm{\alpha}_i\leq \bm{e}_{y_i}\cdot C\;\land\;\bm{\alpha}_i\cdot\bm{1}=0,\quad\forall i=1,\ldots,n.
\end{split}
\end{equation}
\end{problem}
The Problems \ref{prop:partial-dual} and \ref{prop:partial-dual-hinge} are quadratic,
so we can use the dual coordinate ascent algorithm \citep{keerthi2008sequential} to very efficiently solve them for the case of linear kernels. To this end, we need to compute the gradient and solve the restricted problem of optimizing only one $\mathbf{\alpha}_i,\forall i$, keeping all other dual variables fixed~\citep{keerthi2008sequential}.
The gradient of $f$ can be exactly represented by $\bw$:
\vspace{-0.1cm}
\begin{equation}\label{gradient}
  \frac{\partial f}{\partial\alpha_{ij}}=-\beta_j\sum_{\tilde{i}=1}^n\alpha_{\tilde{i}j}k(x_i,x_{\tilde{i}})+1_{y_i=j}=1_{y_i=j}-\langle\bw_j,\phi(x_i)\rangle.
\end{equation}
Suppose the additive change to be applied to the current $\mathbf{\alpha}_i$ is $\mathbf{\delta}\mathbf{\alpha}_i$, then
\begin{align*}
  &f(\alpha_1,\ldots,\alpha_{i-1},\alpha_i+\delta\alpha_i,\alpha_{i+1},\ldots,\alpha_n)\\
  &=-\sum_{j=1}^c\beta_j\sum_{\tilde{i}=1}^n\alpha_{\tilde{i}j}(\alpha_{ij}+\delta\alpha_{ij})k(x_i,x_{\tilde{i}})-\frac{1}{2}\sum_{j=1}^c\beta_j[\delta\alpha_{ij}]^2k(x_i,x_i)+\delta\alpha_{iy_i}+\text{const}\\
  &=\sum_{j=1}^c\frac{\partial f}{\partial\alpha_{ij}}\delta\alpha_{ij}-\frac{1}{2}\sum_{j=1}^c\beta_jk(x_i,x_i)[\delta\alpha_{ij}]^2+\text{const}.
\end{align*}
Therefore, the sub-problem of optimizing $\bm{\delta}\alpha_i$ is given by
\begin{equation}
  \begin{split}
    \max_{\bm{\delta\alpha_i}}&\;-\frac{1}{2}\sum_{j=1}^c\beta_jk(x_i,x_i)[\delta\alpha_{ij}]^2+\sum_{j=1}^c\frac{\partial f}{\partial\alpha_{ij}}\delta\alpha_{ij}\\
    \text{s.t.}&\;\bm{\delta\alpha}_i\leq \bm{e}_{y_i}\cdot C-\bm{\alpha}_i\;\land\;\bm{\delta\alpha}_i\cdot\bm{1}=0.
  \end{split}
\end{equation}
We now consider the subproblem of updating class weights $\bm{\beta}$ with temporarily fixed $\bm{w}$, for which we have the following analytic solution. The proof is deferred to the Supplementary Material \ref{supp:micchelli}.
\begin{proposition}(Solving the subproblem with respect to the class weights)\label{prop:mixture-update}
  Given fixed $\bw_j$, the minimal $\beta_j$ optimizing the problem \eqref{primal-problem-equivalent} is attained at
  \begin{equation}\label{class-weight-update}
    \beta_j=\|\bw_j\|_2^{2-p}\bigg(\sum_{\tilde{j}=1}^c\|\bw_{\tilde{j}}\|_2^p\bigg)^{\frac{p-2}{p}}.
  \end{equation}
\end{proposition}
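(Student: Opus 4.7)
The plan is to reduce the proposition to a one-line Lagrangian calculation once $\bw$ is frozen. With $\bw=(\bw_1,\dots,\bw_c)$ held fixed, the only part of the objective in \eqref{primal-problem-equivalent} that depends on $\bm\beta$ is $\sum_{j=1}^c \|\bw_j\|_2^2/(2\beta_j)$, and the only active constraints are $\beta_j\ge 0$ and $\|\bm\beta\|_{\bar p}\le 1$. Writing $a_j:=\|\bw_j\|_2^2$, the sub-problem becomes
\begin{equation*}
\min_{\bm\beta\ge 0,\;\sum_j \beta_j^{\bar p}\le 1}\; \sum_{j=1}^c \frac{a_j}{2\beta_j}.
\end{equation*}
I would first observe that the objective is strictly convex in each $\beta_j>0$ and strictly decreasing, so at the optimum the norm constraint must be tight (otherwise one could scale $\bm\beta$ up uniformly and decrease the cost). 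Hence I may replace the inequality by $\sum_j\beta_j^{\bar p}=1$.

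Next I would form the Lagrangian $\mathcal L(\bm\beta,\lambda)=\sum_j a_j/(2\beta_j)+\lambda\bigl(\sum_j \beta_j^{\bar p}-1\bigr)$ and set $\partial\mathcal L/\partial\beta_j=0$, yielding $-a_j/(2\beta_j^2)+\lambda\bar p\,\beta_j^{\bar p-1}=0$, i.e.\ $\beta_j^{\bar p+1}=a_j/(2\lambda\bar p)$. Using $\bar p=p/(2-p)$, one computes $\bar p+1=2/(2-p)$, so $\beta_j\propto a_j^{(2-p)/2}=\|\bw_j\|_2^{2-p}$. Plugging this proportional form into the equality constraint and using $(2-p)\bar p=p$ gives
\begin{equation*}
\sum_{j=1}^c \beta_j^{\bar p} = C^{\bar p}\sum_{j=1}^c \|\bw_j\|_2^p = 1\quad\Longrightarrow\quad C = \Bigl(\sum_{\tilde j=1}^c\|\bw_{\tilde j}\|_2^p\Bigr)^{-1/\bar p} = \Bigl(\sum_{\tilde j=1}^c\|\bw_{\tilde j}\|_2^p\Bigr)^{(p-2)/p},
\end{equation*}
which reproduces the claimed formula \eqref{class-weight-update}.

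The one subtle step, and probably the only place requiring care, is the handling of components with $\|\bw_j\|_2=0$ (and the boundary case $p=2$, where $\bar p$ blows up and $\bm\beta$ degenerates to the uniform vector). For vanishing $a_j$ the extended-real convention $a_j/\beta_j=0$ when $a_j=0$ (with $0/0=0$) makes the formula consistent: the KKT derivation still selects $\beta_j=0$ and these coordinates contribute nothing to either the objective or the constraint. Finally, I would briefly note that the stationary point is a global minimum because the restriction of $\bm\beta\mapsto\sum_j a_j/(2\beta_j)$ to the compact convex feasible set is convex, so the unique interior KKT point is optimal; this gives the minimizing representation of $\bm\beta$ in closed form as stated. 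This closed form is exactly the identity behind Lemma 26 of Micchelli and Pontil invoked in the text to justify the equivalence of \eqref{primal-problem} and \eqref{primal-problem-equivalent}.
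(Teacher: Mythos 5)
Your proof is correct and follows essentially the same route as the paper: both reduce, for fixed $\bw$, to the subproblem $\min\{\sum_j \|\bw_j\|_2^2/(2\beta_j) : \beta_j\ge 0,\ \|\bm\beta\|_{\bar p}\le 1\}$ and arrive at the same closed form. The only difference is that the paper obtains the minimizer by directly citing Lemma \ref{lem:micchelli} (Lemma 26 of Micchelli and Pontil) with $r=\bar p$ and $a_j=\|\bw_j\|_2^2$, whereas you re-derive that lemma's content via a tightness-plus-KKT argument --- a self-contained but equivalent computation.
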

The update of $\beta_j$ based on Eq. \eqref{class-weight-update} requires calculating $\|\bw_j\|_2^2$, which can be easily fulfilled by recalling the representation established in Eq. \eqref{partial-representation}.

The resulting training algorithm for the proposed $\ell_p$-norm multi-class SVM is given Algorithm \ref{algorithm:wrapper}. The algorithm alternates between solving a multi-class SVM problem for fixed class weights (Line 3) and updating the class weights in a closed-form manner (Line 5). Recall that Problem \ref{prop:complete-dual} establishes a completely dualized problem, which can be used as a sound stopping and evaluation criterion for the optimization algorithm.
\begin{algorithm2e}[htbp]\label{algorithm:wrapper}
\SetKwInOut{Input}{input}
  \caption{Training algorithm for $\ell_p$-norm multi-class classification.}
    \Input{examples $\{(x_i,y_i)_{i=1}^n\}$ and the kernel $k$.}
    \BlankLine
    initialize $\beta_{j}=\sqrt[\bar{p}]{1/c}, \bw_j=0$ for all $j=1,\ldots,c$\\
    \While{Optimality conditions are not satisfied}{
    optimize the multi-class classification problem \eqref{dual-problem-partial}\\
    compute $\|\bw_j\|_2^2$ for all $j=1,\ldots,c,$ according to Eq. \eqref{partial-representation}\\
    update $\beta_j$ for all $j=1,\ldots,c,$ according to Eq. \eqref{class-weight-update}\\
    }
\end{algorithm2e}

\section{Empirical Analysis\label{sec:empirical}}

We implemented the proposed $\ell_p$-norm multi-class SVM algorithm (Algorithm~\ref{algorithm:wrapper}) in C++ and solved the involved MC-SVM problem using dual coordinate ascent  \cite{keerthi2008sequential}.
We experiment on three benchmark datasets: the Sector dataset studied in \cite{rennie2001improving}, the News 20 dataset collected and originally used for text classification by \cite{lang1995newsweeder},
and the Rcv1 dataset collected by \cite{lewis2004rcv1}. 
Table \ref{tab:data_set} gives a description of these datasets.

\begin{table*}[!h]
\small
\setlength{\tabcolsep}{3pt}
\centering
  \begin{tabular}{*{5}{|c}|}\hline
  Dataset & No. of Classes & No. of Training Examples & No. of Test Examples & No. of Attributes \\\hline
  Sector & $105$ & $6,412$ & $3,207$ & $55,197$ \\\hline
  News 20 & $20$ & $15,935$ & $3,993$ & $62,060$ \\\hline
  Rcv1 & $53$ & $15,564$ & $518,571$ & $47,236$ \\\hline
  \end{tabular}
    \caption{Description of datasets used in the experiments.
		\label{tab:data_set}}
		\vspace{-0.3cm}
\end{table*}
\begin{table*}[!h]
\vspace{0.6cm}
\small
\setlength{\tabcolsep}{3pt}
\centering
  \begin{tabular}{*{4}{|c}|}\hline
 Method / Dataset           & Sector              & News 20              &  Rcv1               \\\hline
  $\ell_p$-norm MC-SVM & $\bm{94.20\pm0.34}$ & $\bm{86.19\pm0.12}$  & $\bm{85.74\pm0.71}$ \\\hline
  Crammer \& Singer & $93.89\pm0.27$      & $85.12\pm0.29$       & $85.21\pm0.32$      \\\hline
  \end{tabular}
  \caption{Test set accuracies achieved by the classical Crammer \& Singer and the proposed $\ell_p$-norm multi-class SVM on the benchmark datasets.
		\label{tab:results}}
		\vspace{-0.3cm}
\end{table*}

We compare with the classical multi-class classification algorithm proposed by Crammer \& Singer \cite{crammer2002algorithmic}, which constitutes strong baseline for these datasets \citep{keerthi2008sequential}.
We employ a $5$-fold cross validation on the training set to tune the regularization parameter $C$ by grid search over the set $\{2^{-12},2^{-11},\ldots,2^{12}\}$
and the parameter $p$ from the interval $[1.2,1.25,\ldots,10]$.
For the parameter $p$ we first use a larger grid of step size 0.5 and then a finer grid of step size 0.1 around the optimum.
Note that the model parameters are tuned separately for each training set and only based on the training set, not the test set.
We repeat the experiments $10$ times, and report in Table \ref{tab:results} on the average accuracy and standard deviations attained on the test set.

We observe that the proposed $\ell_p$-norm MC-SVM consistently outperforms the method by Crammer \& Singer \cite{crammer2002algorithmic} on all considered datasets.
Specifically, our method attains $0.31\%$ accuracy gain on Sector, $1.07\%$ accuracy gain on News 20, and $0.53\%$ accuracy gain on Rcv1.
These promising results indicate that the proposed $\ell_p$-norm multiclass SVM could further lift the state of the art in multi-class classification,
even in real-world applications beyond the ones studied in this paper.

\section{Conclusion\label{sec:conclusion}}

Motivated by the ever growing size of multi-class datasets in real-world applications such as image annotation and web advertising,
which involve tens or hundreds of thousands of classes,
we studied the influence of the class size on the generalization behavior of multi-class classifiers.
We focus here on data-dependent generalization bounds enjoying the ability to capture the properties of the distribution that has generated the data.
Of independent interest, for hypothesis classes that are given as a maximum over base classes,
we developed a new structural result on Gaussian complexities that is able to preserve the coupling among different components,
while the existing structural results ignore this coupling and may yield suboptimal generalization bounds.
We applied the new structural result to study learning rates for multi-class classifiers,
and derived, for the first time, a data-dependent bound with a logarithmic dependence on the class size,
which substantially outperforms the linear dependence in the state-of-the-art data-dependent generalization bounds.

Motivated by the theoretical analysis, we proposed a novel $\ell_p$-norm regularized multi-class support vector machine,
where the parameter $p$ controls the complexity of the corresponding bounds.
This class of algorithms contains the classical model by Crammer \& Singer \cite{crammer2002algorithmic} as a special case for $p=2$.
We developed an effective optimization algorithm based on the Fenchel dual representation.
For several standard benchmarks for multi-class classification taken from various domains,
the proposed approach surpassed the state-of-the-art method of Crammer \& Singer \cite{crammer2002algorithmic}, by up to $1\%$.

An exciting future direction will be to derive a data-dependent bound that is completely independent of the class size (even overcoming the mild logarithmic dependence of our bounds).
To this end, we will study more powerful structural results than Lemma \ref{lem:rademacher-max} for controlling complexities of function classes induced via the maximum operator.
As a good starting point to this end, we will consider $\ell_\infty$-covering numbers.

\begin{small}
\setlength{\bibsep}{0.03cm}
\bibliographystyle{ieeetr}
\bibliography{MCC}

\begin{thebibliography}{10}

\bibitem{zhang2004class}
T.~Zhang, ``Class-size independent generalization analsysis of some
  discriminative multi-category classification,'' in {\em Advances in Neural
  Information Processing Systems}, pp.~1625--1632, 2004.

\bibitem{hofmann2003learning}
T.~Hofmann, L.~Cai, and M.~Ciaramita, ``Learning with taxonomies: Classifying
  documents and words,'' in {\em NIPS workshop on syntax, semantics, and
  statistics}, 2003.

\bibitem{deng2009imagenet}
J.~Deng, W.~Dong, R.~Socher, L.-J. Li, K.~Li, and L.~Fei-Fei, ``Imagenet: A
  large-scale hierarchical image database,'' in {\em Computer Vision and
  Pattern Recognition, 2009. CVPR 2009. IEEE Conference on}, pp.~248--255,
  IEEE, 2009.

\bibitem{beygelzimer2009conditional}
A.~Beygelzimer, J.~Langford, Y.~Lifshits, G.~Sorkin, and A.~Strehl,
  ``Conditional probability tree estimation analysis and algorithms,'' in {\em
  Proceedings of UAI}, pp.~51--58, AUAI Press, 2009.

\bibitem{bengio2010label}
S.~Bengio, J.~Weston, and D.~Grangier, ``Label embedding trees for large
  multi-class tasks,'' in {\em Advances in Neural Information Processing
  Systems}, pp.~163--171, 2010.

\bibitem{jain2009active}
P.~Jain and A.~Kapoor, ``Active learning for large multi-class problems,'' in
  {\em Computer Vision and Pattern Recognition, 2009. CVPR 2009. IEEE
  Conference on}, pp.~762--769, IEEE, 2009.

\bibitem{dekel2010multiclass}
O.~Dekel and O.~Shamir, ``Multiclass-multilabel classification with more
  classes than examples,'' in {\em International Conference on Artificial
  Intelligence and Statistics}, pp.~137--144, 2010.

\bibitem{gupta2014training}
M.~R. Gupta, S.~Bengio, and J.~Weston, ``Training highly multiclass
  classifiers,'' {\em The Journal of Machine Learning Research}, vol.~15,
  no.~1, pp.~1461--1492, 2014.

\bibitem{zhang2004statistical}
T.~Zhang, ``Statistical analysis of some multi-category large margin
  classification methods,'' {\em The Journal of Machine Learning Research},
  vol.~5, pp.~1225--1251, 2004.

\bibitem{tewari2007consistency}
A.~Tewari and P.~L. Bartlett, ``On the consistency of multiclass classification
  methods,'' {\em The Journal of Machine Learning Research}, vol.~8,
  pp.~1007--1025, 2007.

\bibitem{glasmachers2010universal}
T.~Glasmachers, ``Universal consistency of multi-class support vector
  classification,'' in {\em Advances in Neural Information Processing Systems},
  pp.~739--747, 2010.

\bibitem{mohri2012foundations}
M.~Mohri, A.~Rostamizadeh, and A.~Talwalkar, {\em Foundations of machine
  learning}.
\newblock MIT press, 2012.

\bibitem{kuznetsov2014multi}
V.~Kuznetsov, M.~Mohri, and U.~Syed, ``Multi-class deep boosting,'' in {\em
  Advances in Neural Information Processing Systems}, pp.~2501--2509, 2014.

\bibitem{koltchinskii2002empirical}
V.~Koltchinskii and D.~Panchenko, ``Empirical margin distributions and bounding
  the generalization error of combined classifiers,'' {\em Annals of
  Statistics}, pp.~1--50, 2002.

\bibitem{guermeur2002combining}
Y.~Guermeur, ``Combining discriminant models with new multi-class svms,'' {\em
  Pattern Analysis \& Applications}, vol.~5, no.~2, pp.~168--179, 2002.

\bibitem{oneto2011impact}
L.~Oneto, D.~Anguita, A.~Ghio, and S.~Ridella, ``The impact of unlabeled
  patterns in rademacher complexity theory for kernel classifiers,'' in {\em
  Advances in Neural Information Processing Systems}, pp.~585--593, 2011.

\bibitem{koltchinskii2000rademacher}
V.~Koltchinskii and D.~Panchenko, ``Rademacher processes and bounding the risk
  of function learning,'' in {\em High Dimensional Probability II},
  pp.~443--457, Springer, 2000.

\bibitem{cortes2013multi}
C.~Cortes, M.~Mohri, and A.~Rostamizadeh, ``Multi-class classification with
  maximum margin multiple kernel,'' in {\em ICML-13}, pp.~46--54, 2013.

\bibitem{kloft2011lp}
M.~Kloft, U.~Brefeld, S.~Sonnenburg, and A.~Zien, ``Lp-norm multiple kernel
  learning,'' {\em The Journal of Machine Learning Research}, vol.~12,
  pp.~953--997, 2011.

\bibitem{crammer2002algorithmic}
K.~Crammer and Y.~Singer, ``On the algorithmic implementation of multiclass
  kernel-based vector machines,'' {\em The Journal of Machine Learning
  Research}, vol.~2, pp.~265--292, 2002.

\bibitem{bartlett2002rademacher}
P.~L. Bartlett and S.~Mendelson, ``Rademacher and gaussian complexities: Risk
  bounds and structural results,'' {\em J. Mach. Learn. Res.}, vol.~3,
  pp.~463--482, 2002.

\bibitem{ledoux1991probability}
M.~Ledoux and M.~Talagrand, {\em Probability in Banach Spaces: isoperimetry and
  processes}, vol.~23.
\newblock Berlin: Springer, 1991.

\bibitem{hill2007framework}
S.~I. Hill and A.~Doucet, ``A framework for kernel-based multi-category
  classification.,'' {\em J. Artif. Intell. Res.(JAIR)}, vol.~30, pp.~525--564,
  2007.

\bibitem{micchelli2005learning}
C.~A. Micchelli and M.~Pontil, ``Learning the kernel function via
  regularization,'' {\em Journal of Machine Learning Research}, pp.~1099--1125,
  2005.

\bibitem{keerthi2008sequential}
S.~S. Keerthi, S.~Sundararajan, K.-W. Chang, C.-J. Hsieh, and C.-J. Lin, ``A
  sequential dual method for large scale multi-class linear svms,'' in {\em
  14th ACM SIGKDD}, pp.~408--416, ACM, 2008.

\bibitem{rennie2001improving}
J.~D. Rennie and R.~Rifkin, ``Improving multiclass text classification with the
  support vector machine,'' tech. rep., AIM-2001-026, MIT, 2001.

\bibitem{lang1995newsweeder}
K.~Lang, ``Newsweeder: Learning to filter netnews,'' in {\em Proceedings of the
  12th international conference on machine learning}, pp.~331--339, 1995.

\bibitem{lewis2004rcv1}
D.~D. Lewis, Y.~Yang, T.~G. Rose, and F.~Li, ``Rcv1: A new benchmark collection
  for text categorization research,'' {\em The Journal of Machine Learning
  Research}, vol.~5, pp.~361--397, 2004.

\bibitem{vitale2000some}
R.~A. Vitale, ``Some comparisons for gaussian processes,'' {\em Proceedings of
  the American Mathematical Society}, pp.~3043--3046, 2000.

\bibitem{mcdiarmid1989method}
C.~McDiarmid, ``On the method of bounded differences,'' in {\em Surveys in
  combinatorics} (J.~Siemous, ed.), pp.~148--188, Cambridge: Cambridge Univ.
  Press, 1989.

\bibitem{kakade2012regularization}
S.~M. Kakade, S.~Shalev-Shwartz, and A.~Tewari, ``Regularization techniques for
  learning with matrices,'' {\em J. Mach. Learn. Res.}, vol.~13,
  pp.~1865--1890, 2012.

\bibitem{winkelbauer2012moments}
A.~Winkelbauer, ``Moments and absolute moments of the normal distribution,''
  {\em arXiv preprint arXiv:1209.4340}, 2012.

\bibitem{robbins1955remark}
H.~Robbins, ``A remark on stirling's formula,'' {\em American Mathematical
  Monthly}, pp.~26--29, 1955.

\bibitem{DBLP:journals/corr/abs-0910-0610}
S.~M. Kakade, S.~Shalev{-}Shwartz, and A.~Tewari, ``Applications of strong
  convexity--strong smoothness duality to learning with matrices,'' {\em CoRR},
  vol.~abs/0910.0610, 2009.

\end{thebibliography}
\end{small}

\newpage
\appendix
\section*{Supplementary Material}
\numberwithin{equation}{section}
\numberwithin{theorem}{section}
\numberwithin{figure}{section}
\numberwithin{table}{section}
\renewcommand{\thesection}{{\Alph{section}}}
\renewcommand{\thesubsection}{\Alph{section}.\arabic{subsection}}
\renewcommand{\thesubsubsection}{\Roman{section}.\arabic{subsection}.\arabic{subsubsection}}
\setcounter{secnumdepth}{-1}
\setcounter{secnumdepth}{3}

\section{Proofs on Structural Results on Gaussian Complexity~\label{supp:complexity}}
Our discussion on complexity bound is based on the following comparison results among different Gaussian processes.
\begin{lemma}[Theorem 1 in \citep{vitale2000some}\label{lem:gaussian-comparison}]
  Let $\{\mathfrak{X}_\theta:\theta\in \Theta\}$ and $\{\mathfrak{Y}_\theta:\theta\in\Theta\}$ be two non-zero real-valued Gaussian processes indexed by the same countable set $\Theta$ and suppose that
  \begin{equation}\label{increment-condition}
    \ebb[(\mathfrak{Y}_\theta-\mathfrak{Y}_{\bar{\theta}})^2]\leq \ebb[(\mathfrak{X}_\theta-\mathfrak{X}_{\bar{\theta}})^2],\quad\forall \theta,\bar{\theta}\in\Theta.
  \end{equation}
  Then, $$\ebb[\sup_\theta \mathfrak{Y}_\theta]\leq\ebb[\sup_\theta \mathfrak{X}_\theta].$$
\end{lemma}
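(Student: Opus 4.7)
The plan is to follow the standard Sudakov--Fernique strategy: reduce to a finite index set, smooth the maximum, interpolate between independent copies of the two processes, and apply Gaussian integration by parts. I first pass to the finite case by writing the countable set $\Theta$ as an increasing union of finite sets $\Theta_1\subseteq\Theta_2\subseteq\cdots$; both $\sup_{\theta\in\Theta_n}\mathfrak{X}_\theta$ and $\sup_{\theta\in\Theta_n}\mathfrak{Y}_\theta$ increase pointwise to their respective suprema over $\Theta$, so by monotone convergence it suffices to prove the statement for every finite $\Theta'$. By taking independent copies on an enlarged space, I may also assume $\mathfrak{X}$ and $\mathfrak{Y}$ are mutually independent, since only their marginal laws enter the conclusion.

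For finite $\Theta$ with $|\Theta|=N$, I approximate the maximum by the log-sum-exp $F_\beta(x):=\beta^{-1}\log\sum_{\theta\in\Theta}e^{\beta x_\theta}$, which satisfies $0\leq F_\beta(x)-\max_\theta x_\theta\leq\beta^{-1}\log N$. Define the Gaussian interpolation $\mathfrak{Z}_\theta(t):=\sqrt{t}\,\mathfrak{X}_\theta+\sqrt{1-t}\,\mathfrak{Y}_\theta$ for $t\in[0,1]$, so $\mathfrak{Z}(0)=\mathfrak{Y}$ and $\mathfrak{Z}(1)=\mathfrak{X}$, and set $\phi(t):=\ebb[F_\beta(\mathfrak{Z}(t))]$. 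The target is $\phi'(t)\geq 0$ on $(0,1)$. Differentiating under the expectation and applying Gaussian integration by parts --- i.e.\ $\ebb[G_i\,h(G)]=\sum_j\mathrm{Cov}(G_i,G_j)\ebb[\partial_j h(G)]$ for a centered Gaussian vector $G$ --- together with the independence $\mathfrak{X}\perp\mathfrak{Y}$ yields
\[
\phi'(t)=\tfrac12\sum_{\theta,\theta'}\bigl(K^X_{\theta\theta'}-K^Y_{\theta\theta'}\bigr)\,\ebb\bigl[H_{\theta\theta'}(\mathfrak{Z}(t))\bigr],
\]
where $K^X,K^Y$ are the covariance matrices and $H:=\nabla^2 F_\beta$.

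The crucial observation is that $F_\beta$ is translation-covariant, $F_\beta(x+c\mathbf{1})=F_\beta(x)+c$, hence $\sum_\theta\partial_\theta F_\beta\equiv 1$ and the Hessian $H$ has zero row and column sums. This identity lets me rewrite any quadratic form $\sum_{\theta,\theta'}M_{\theta\theta'}H_{\theta\theta'}$ as $-\tfrac12\sum_{\theta,\theta'}(M_{\theta\theta}+M_{\theta'\theta'}-2M_{\theta\theta'})H_{\theta\theta'}$; applied to $M=K^X-K^Y$, it converts the raw covariance gap into the prescribed increment gap
\[
\phi'(t)=-\tfrac14\,\ebb\sum_{\theta,\theta'}\Bigl(\ebb[(\mathfrak{X}_\theta-\mathfrak{X}_{\theta'})^2]-\ebb[(\mathfrak{Y}_\theta-\mathfrak{Y}_{\theta'})^2]\Bigr)H_{\theta\theta'}(\mathfrak{Z}(t)).
\]
By hypothesis~\eqref{increment-condition} the first factor is nonnegative, and a direct computation with $p_\theta(x):=e^{\beta x_\theta}/\sum_{\theta''}e^{\beta x_{\theta''}}$ gives $H_{\theta\theta'}=-\beta p_\theta p_{\theta'}\leq 0$ for $\theta\neq\theta'$, while the diagonal $\theta=\theta'$ contributes zero because the increment vanishes. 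Hence the integrand is nonpositive and $\phi'(t)\geq 0$, which yields $\ebb[F_\beta(\mathfrak{X})]\geq\ebb[F_\beta(\mathfrak{Y})]$. Letting $\beta\to\infty$ and using the uniform bound $|F_\beta-\max|\leq\beta^{-1}\log N$ disposes of the finite case, and the monotone convergence reduction in the first paragraph completes the proof.

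I expect the main obstacle to be the covariance-to-increment rewriting: it hinges on the translation covariance of the log-sum-exp and would fail for a generic smooth surrogate of the maximum, so the approximation has to be chosen with this algebraic identity in mind. A secondary technical point is guaranteeing integrability of the suprema on the countable index set so that the monotone convergence step is legitimate; the lemma implicitly assumes this through its standing hypothesis on the Gaussian processes.
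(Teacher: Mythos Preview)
The paper does not give its own proof of this lemma; it is quoted verbatim as Theorem~1 of \citep{vitale2000some} and used as a black box in the proof of Lemma~\ref{lem:rademacher-max}. So there is no in-paper argument to compare against.

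Your argument is the standard Sudakov--Fernique interpolation proof and is correct. The reduction to finite index sets, the log-sum-exp smoothing, the $\sqrt{t}/\sqrt{1-t}$ interpolation between independent copies, Gaussian integration by parts, and the translation-covariance trick that turns the covariance gap into the increment gap are all executed properly; the sign analysis via the off-diagonal Hessian entries $H_{\theta\theta'}=-\beta p_\theta p_{\theta'}\le 0$ is the right way to close it. One small remark: the paper's phrasing ``non-zero'' is almost certainly a slip for ``mean-zero'' (Vitale's original assumes centered processes, and every application in the paper uses centered linear forms in i.i.d.\ $N(0,1)$ variables). Your proof tacitly uses centering both in the integration-by-parts identity and in rewriting $K_{\theta\theta}+K_{\theta'\theta'}-2K_{\theta\theta'}$ as $\ebb[(\mathfrak{X}_\theta-\mathfrak{X}_{\theta'})^2]$, so you may want to state that assumption explicitly. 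The integrability caveat you raise at the end is indeed the only remaining technicality, and it is a standing hypothesis rather than something to be derived.
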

\begin{proof}[\rm\textbf{Proof of Lemma \ref{lem:rademacher-max}}]
  Define two Gaussian processes indexed by $H$ (for any $h\in H$, we use here the equivalent representation $h=(h_1,\ldots,h_c)$):
\begin{gather*}
  \mathfrak{X}_h:=\sum_{i=1}^n\big[g_i\max\{h_1(x_i),h_2(x_i),\ldots,h_c(x_i)\}\big],\\
  \mathfrak{Y}_h:=\sum_{i=1}^n\sum_{j=1}^cg_{(j-1)n+i}h_j(x_i),\qquad\forall h\in H.
\end{gather*}
For any $h=(h_1,\ldots,h_c),\bar{h}=(\bar{h}_1,\ldots,\bar{h}_c)\in H$, the independence of the $g_i$ and the equalities $\ebb g_i^2=1$ imply that
\begin{gather*}
  \ebb[(\mathfrak{X}_h-\mathfrak{X}_{\bar{h}})^2]=\sum_{i=1}^n\big[\max\{h_1(x_i),\ldots,h_c(x_i)\}-\max\{\bar{h}_1(x_i),\ldots,\bar{h}_c(x_i)\}\big]^2\\
  \ebb[(\mathfrak{Y}_h-\mathfrak{Y}_{\bar{h}})^2]=\sum_{i=1}^n\big[(h_1(x_i)-\bar{h}_1(x_i))^2+\cdots+(h_c(x_i)-\bar{h}_c(x_i))^2\big].
\end{gather*}
For any $\bm{a}=(a_1,\ldots,a_c),\bm{b}=(b_1,\ldots,b_c)\in\rbb^c$, it can be directly checked that
$$|\max\{a_1,\ldots,a_c\}-\max\{b_1,\ldots,b_c\}|\leq\max\{|a_1-b_1|,\ldots,|a_c-b_c|\}\leq\sum_{i=1}^c|a_i-b_i|.$$
Applying the above inequality with $\bm{a}=(h_1(x_i),\ldots,h_c(x_i)),\bm{b}=(\bar{h}_1(x_i),\ldots,\bar{h}_c(x_i)),i=1,\ldots,n$, yields directly the following bounds relating the increments of the two Gaussian
 processes $\mathfrak{X}_h,\mathfrak{Y}_h$:
\begin{align*}
  \ebb[(\mathfrak{X}_h-\mathfrak{X}_{\bar{h}})^2]&\leq\sum_{i=1}^n\max\{|h_1(x_i)-\bar{h}_1(x_i)|,\ldots,|h_c(x_i)-\bar{h}_c(x_i)|\}^2\\
  &=\sum_{i=1}^n\max\{|h_1(x_i)-\bar{h}_1(x_i)|^2,\ldots,|h_c(x_i)-\bar{h}_c(x_i)|^2\}\\
  &\leq\sum_{i=1}^n\sum_{j=1}^c|h_j(x_i)-\bar{h}_j(x_i)|^2=\ebb[(\mathfrak{Y}_h-\mathfrak{Y}_{\bar{h}})^2],\quad\forall h,\bar{h}\in H.
\end{align*}
That is, the condition \eqref{increment-condition} holds and therefore Lemma \ref{lem:gaussian-comparison} can be applied here to yield the stated result.
\end{proof}

The following lemma gives a general Gaussian complexity bound for hypothesis spaces used in multi-class classification.
\begin{lemma}[Gaussian complexity of multi-class hypothesis spaces]\label{lem:gaussian-comparison-compound}
  Let $H$ be a class of functions defined on $\xcal\times\ycal$ with $\ycal=\{1,\ldots,c\}$. Let $S=\{(x_1,y_1),\ldots,(x_n,y_n)\}$ be a sequence of examples. Let $g_1,\ldots,g_{nc}$ be independent $N(0,1)$ distributed random variables.  Then the empirical Gaussian complexity of $H$ can be controlled by:
  $$\mathfrak{G}_S(H)\leq\frac{1}{n}\ebb_{\bm{g}}\sup_{h\in H}\sum_{i=1}^n\sum_{j=1}^cg_{(j-1)n+i}h_j(x_i).$$
\end{lemma}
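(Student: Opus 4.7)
The plan is to mimic the Gaussian comparison argument used to prove Lemma \ref{lem:rademacher-max}, now applied to the pair of processes that correspond to the two sides of the target inequality. Writing $z_i=(x_i,y_i)$, the definition of Gaussian complexity together with the identification $h(x,y)=h_y(x)$ gives
$$\mathfrak{G}_S(H)=\frac{1}{n}\ebb_{\bm{g}}\sup_{h\in H}\sum_{i=1}^n g_i h_{y_i}(x_i),$$
so the inequality to prove amounts to comparing the expected suprema of two centered Gaussian processes indexed by $H$.

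I would define, for each $h=(h_1,\ldots,h_c)\in H$,
$$\mathfrak{X}_h:=\sum_{i=1}^n g_i h_{y_i}(x_i),\qquad \mathfrak{Y}_h:=\sum_{i=1}^n\sum_{j=1}^c g_{(j-1)n+i}h_j(x_i),$$
where the $g_i$ used in $\mathfrak{X}_h$ can be taken as the first $n$ of the $nc$ variables used in $\mathfrak{Y}_h$ (the particular coupling does not matter for the Slepian-type comparison, which depends only on increment variances). Using independence and $\ebb g_i^2=1$, I would compute the increment variances as
$$\ebb[(\mathfrak{X}_h-\mathfrak{X}_{\bar{h}})^2]=\sum_{i=1}^n(h_{y_i}(x_i)-\bar{h}_{y_i}(x_i))^2,\qquad \ebb[(\mathfrak{Y}_h-\mathfrak{Y}_{\bar{h}})^2]=\sum_{i=1}^n\sum_{j=1}^c(h_j(x_i)-\bar{h}_j(x_i))^2.$$

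The key step is the trivial pointwise bound: for each $i$, the single squared term indexed by $j=y_i$ is clearly at most the full nonnegative sum over $j\in\{1,\ldots,c\}$, i.e.\ $(h_{y_i}(x_i)-\bar{h}_{y_i}(x_i))^2\le\sum_{j=1}^c(h_j(x_i)-\bar{h}_j(x_i))^2$. Summing over $i$ yields $\ebb[(\mathfrak{X}_h-\mathfrak{X}_{\bar{h}})^2]\le\ebb[(\mathfrak{Y}_h-\mathfrak{Y}_{\bar{h}})^2]$ for all $h,\bar{h}\in H$, which is exactly the hypothesis \eqref{increment-condition} of the Gaussian comparison lemma (Lemma \ref{lem:gaussian-comparison}).

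Applying Lemma \ref{lem:gaussian-comparison} immediately gives $\ebb\sup_h\mathfrak{X}_h\le\ebb\sup_h\mathfrak{Y}_h$, and dividing both sides by $n$ produces the claimed inequality. There is no real obstacle here: the proof is essentially a one-step Sudakov--Fernique-type comparison, and the only minor subtlety is the mild measurability/countability hypothesis on $H$ needed to apply Lemma \ref{lem:gaussian-comparison}, which can be handled by the standard reduction to a countable dense subclass (or by assuming separability of $H$, as is customary in this line of work).
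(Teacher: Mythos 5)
Your proposal is correct and follows essentially the same route as the paper: the paper's proof defines exactly the same two Gaussian processes $\mathfrak{X}_h=\sum_i g_i h_{y_i}(x_i)$ and $\mathfrak{Y}_h=\sum_{i,j}g_{(j-1)n+i}h_j(x_i)$, bounds the increment variance of $\mathfrak{X}$ by that of $\mathfrak{Y}$ via the same pointwise inequality $(h_{y_i}(x_i)-\bar{h}_{y_i}(x_i))^2\leq\sum_{j=1}^c(h_j(x_i)-\bar{h}_j(x_i))^2$, and concludes by Lemma \ref{lem:gaussian-comparison}. Your additional remark on the countability hypothesis is a reasonable point the paper leaves implicit.
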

\begin{proof}
  Define two Gaussian processes indexed by $H$:
  $$
  \mathfrak{X}_h:=\sum_{i=1}^ng_ih_{y_i}(x_i),\quad
  \mathfrak{Y}_h:=\sum_{i=1}^n\sum_{j=1}^cg_{(j-1)n+i}h_j(x_i),\quad\forall h\in H.
  $$
  For any $h,\bar{h} \in H$, it is obvious that
  \begin{align*}
    \ebb[(\mathfrak{X}_h-\mathfrak{X}_{\bar{h}})^2]&=\sum_{i=1}^n[h_{y_i}(x_i)-\bar{h}_{y_i}(x_i)]^2\\
    &\leq\sum_{i=1}^n\big[(h_1(x_i)-\bar{h}_1(x_i))^2+\cdots+(h_c(x_i)-\bar{h}_c(x_i))^2\big]\\
    &=\ebb[(\mathfrak{Y}_h-\mathfrak{Y}_{\bar{h}})^2].
  \end{align*}
  Now the stated inequality follows directly from Lemma \ref{lem:gaussian-comparison}.
\end{proof}

\section{Proofs on Generalization bounds For Multi-class Classifiers\label{supp:genbound}}
\subsection{Proof of Theorem \ref{thm:risk-bound-mohri}}
\begin{proof}[\rm\textbf{Proof of Theorem \ref{thm:risk-bound-mohri}}]
 For any $\theta>0$, introduce the following function bounding $\rho_h(x,y)$ from below:
 $$\rho_{\theta,h}(x,y)=h(x,y)-\max_{y^{'}\in\ycal}[h(x,y^{'})-\theta1_{y^{'}= y}]=\min_{y^{'}\in\ycal}[h(x,y)-h(x,y^{'})+\theta1_{y^{'}= y}].$$It can be checked that $\rho_{\theta,h}(x,y)=\min(\rho_h(x,y),\theta)$. Introduce two function
 classes derived from $\rho_{\theta,h}$:$$\widetilde{H_\theta}=\{\rho_{\theta,h}(x,y):h\in H\},\qquad\widetilde{\hcal_\theta}=\{\ell(\rho_{\theta,h}(x,y)):h\in H\}.$$According to the definition of $L$-regular loss function and
 the relationship $\rho_{\theta,h}\leq\rho_h$, we have
 $$R(h)=\ebb[1_{\rho_h(X,Y)}\leq0]\leq\ebb[1_{\rho_{\theta,h}(X,Y)}\leq0]\leq\ebb[\ell(\rho_{\theta,h}(X,Y))],$$which, together with McDiarmid inequality~\citep{mcdiarmid1989method}, yields the following inequality
 \begin{equation}\label{risk-bound-mohri-1}
    R(h)\leq\frac{1}{n}\sum_{i=1}^n\ell(\rho_{\theta,h}(x_i,y_i))+2\mathfrak{R}_S(\widetilde{\hcal_\theta})+3B_\ell\sqrt{\frac{\log\frac{2}{\delta}}{2n}},\quad\forall h\in H
 \end{equation}
 with probability at least $1-\delta$.

For the fixed parameter  $\theta=c_\ell$, we observe that $\rho_{\theta,h}(x,y)=\min(\rho_h(x,y),c_\ell)$. If $\rho_h(x,y)>c_\ell$, the definition of $L$-regular loss implies that $$\ell(\rho_{\theta,h}(x,y))=\ell(c_\ell)=0=\ell(\rho_h(x,y)).$$Otherwise, we have
$\rho_{\theta,h}(x,y)=\rho_{h}(x,y)$. Therefore, for any $(x,y)$ we have $\ell(\rho_{\theta, h}(x,y))=\ell(\rho_{h}(x,y))$, which, coupled with the Lipschitz property of $\ell$ and Eq.~\eqref{risk-bound-mohri-1}, yields the following inequality with probability at least $1-\delta$:
\begin{equation}\label{risk-bound-mohri-2}
  R(h)\leq\frac{1}{n}\sum_{i=1}^n\ell(\rho_{h}(x_i,y_i))+2L\mathfrak{R}_S(\widetilde{H_\theta})+3B_\ell\sqrt{\frac{\log\frac{2}{\delta}}{2n}},\quad \forall h\in H.
\end{equation}

The Rademacher complexity of $\widetilde{H_\theta}$ satisfies the following inequality:
\begin{equation}\label{risk-bound-mohri-3}
\begin{split}
  \mathfrak{R}_S(\widetilde{H_\theta})&=\frac{1}{n}\ebb_{\sigma}\Big[\sup_{h\in H}\sum_{i=1}^n\sigma_i\big(h(x_i,y_i)-\max_{y\in\ycal}(h(x_i,y)-\theta1_{y=y_i})\big)\Big]\\
  &\leq\frac{1}{n}\ebb_{\sigma}[\sup_{h\in H}\sum_{i=1}^n\sigma_ih(x_i,y_i)]+\frac{1}{n}\ebb_{\sigma}\Big[\sup_{h\in H}\sum_{i=1}^n\sigma_i\max_{y\in\ycal}(h(x_i,y)-\theta1_{y=y_i})\Big]\\
  &\leq\sqrt{\frac{\pi}{2}}\mathfrak{G}_S(H)+\frac{1}{n}\sqrt{\frac{\pi}{2}}\ebb_{g}\Big[\sup_{h\in H}\sum_{i=1}^ng_i\max(h_1(x_i)-\theta1_{y_i=1},\ldots,h_c(x_i)-\theta1_{y_i=c})\Big],
\end{split}
\end{equation}
where the last step follows from the relationship between Gaussian and Rademacher processes expressed in Eq. \eqref{gaussian-rademacher}.
Furthermore, according to Lemma \ref{lem:rademacher-max}, the last term of the above inequality can be addressed by
\begin{align*}
  &\ebb_{\bm{g}}[\sup_{h\in H}\sum_{i=1}^ng_i\max\{h_1(x_i)-\theta1_{y_i=1},h_2(x_i)-\theta1_{y_i=2},\ldots,h_c(x_i)-\theta1_{y_i=c}\}]\\
  &\stackrel{Lem.~\ref{lem:rademacher-max}}{\leq}\ebb_{\bm{g}}\sup_{h\in H}\sum_{i=1}^n\big[g_i(h_1(x_i)-\theta1_{y_i=1})+g_{n+i}(h_2(x_i)-\theta1_{y_i=2})+\cdots+g_{(c-1)n+i}(h_c(x_i)-\theta1_{y_i=c})\big]\\
  &=\ebb_{\bm{g}}\sup_{h\in H}\sum_{i=1}^n\big[g_ih_1(x_i)+g_{n+i}h_2(x_i)+\cdots+g_{(c-1)n+i}h_c(x_i)\big]\\
  &\qquad-\ebb_{\bm{g}}\sum_{i=1}^n[g_i\theta1_{y_i=1}+\cdots+g_{(c-1)n+i}\theta1_{y_i=c}]\\
  &=\ebb_{\bm{g}}\sup_{h\in H}\sum_{i=1}^n\big[g_ih_1(x_i)+g_{n+i}h_2(x_i)+\cdots+g_{(c-1)n+i}h_c(x_i)\big].
\end{align*}
With this inequality and using Lemma \ref{lem:gaussian-comparison-compound} to tackle $\mathfrak{G}_S(H)$, we immediately derive the following bound on $\mathfrak{R}_S(\widetilde{H_\theta})$:
$$\mathfrak{R}_S(\widetilde{H_\theta})\leq\frac{\sqrt{2\pi}}{n}\ebb_{\bm{g}}\sup_{h\in H}\sum_{i=1}^n\sum_{j=1}^cg_{(j-1)n+i}h_j(x_i).$$Putting this Rademacher complexity bound back into Eq.~\eqref{risk-bound-mohri-2},
we obtain the stated result.
\end{proof}

\subsection{Proof of Theorem \ref{thm:risk-bounds-strong-convex}}
To apply Theorem \ref{thm:risk-bound-mohri}, we need to control the term $\sup_{h\in H}\sum_{i=1}^n\sum_{j=1}^cg_{(j-1)n+i}h_j(x_i)$, which we tackle by the following lemma due to \cite{kakade2012regularization}.
\begin{lemma}[Corollary 4 in \cite{kakade2012regularization}]\label{lemma:kakade-fenchel}
  If $f$ is $\beta$-strongly convex w.r.t. $\|\cdot\|$ and $f^*(\bm{0})=0$, then, for any sequence $v_1,\ldots,v_n$ and for any $\mu$ we have
  $$\sum_{i=1}^n\langle v_i,\mu\rangle-f(\mu)\leq \sum_{i=1}^n\langle\triangledown f^*(v_{1:i-1},v_i)+\frac{1}{2\beta}\sum_{i=1}^n\|v_i\|^2_*,$$where $v_{1:i}$ denotes the sum $\sum_{j=1}^iv_j$.
\end{lemma}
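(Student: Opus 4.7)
The plan is to reduce the claim to a telescoping argument combined with the classical duality between strong convexity and smoothness. First, observing that $\sum_{i=1}^n v_i = v_{1:n}$, the left-hand side satisfies
\[
\sum_{i=1}^n\langle v_i,\mu\rangle - f(\mu) = \langle v_{1:n},\mu\rangle - f(\mu) \leq \sup_{\mu}\bigl[\langle v_{1:n},\mu\rangle - f(\mu)\bigr] = f^*(v_{1:n})
\]
by the definition of the Fenchel conjugate. Since this bound holds pointwise in $\mu$, it suffices to upper bound $f^*(v_{1:n})$ by the right-hand side of the claimed inequality.

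Next I would use the normalization $f^*(\mathbf{0}) = 0$ to telescope across the partial sums:
\[
f^*(v_{1:n}) = f^*(\mathbf{0}) + \sum_{i=1}^n\bigl[f^*(v_{1:i}) - f^*(v_{1:i-1})\bigr] = \sum_{i=1}^n\bigl[f^*(v_{1:i}) - f^*(v_{1:i-1})\bigr].
\]
The crucial ingredient is the duality between strong convexity of $f$ and smoothness of $f^*$: since $f$ is $\beta$-strongly convex w.r.t.\ $\|\cdot\|$, $f^*$ is differentiable with a $(1/\beta)$-Lipschitz gradient w.r.t.\ $\|\cdot\|_*$. The standard descent inequality for such functions then gives
\[
f^*(u+v) \leq f^*(u) + \langle \nabla f^*(u), v\rangle + \frac{1}{2\beta}\|v\|_*^2.
\]
Applying this with $u = v_{1:i-1}$ and $v = v_i$ (so $u+v = v_{1:i}$) bounds each telescoped increment by $\langle \nabla f^*(v_{1:i-1}), v_i\rangle + \frac{1}{2\beta}\|v_i\|_*^2$; summing over $i$ delivers the claim.

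The main obstacle is establishing the strong-convexity/smoothness duality, which is the non-routine ingredient in the argument. This is a classical fact of convex analysis: $\beta$-strong convexity of $w\mapsto f(w) - \langle w,u\rangle$ forces the maximizer defining $\nabla f^*(u)$ to be unique, so $f^*$ is indeed differentiable, and comparing the optimality conditions at two base points $u,u'$ yields the Lipschitz bound $\|\nabla f^*(u) - \nabla f^*(u')\| \leq \|u-u'\|_*/\beta$. Once this Lipschitz estimate is in hand, the descent inequality above follows by the fundamental theorem of calculus applied to $t\mapsto f^*(u+tv)$, and the remainder of the proof is the bookkeeping above. I would cite a standard reference (e.g., Zalinescu or Hiriart-Urruty--Lemar\'echal) for the duality step rather than reprove it in full.
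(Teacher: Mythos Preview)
Your argument is correct and is essentially the standard proof of this result: bound the left-hand side by $f^*(v_{1:n})$ via Fenchel duality, telescope using $f^*(\mathbf{0})=0$, and control each increment by the smoothness of $f^*$ that is dual to the $\beta$-strong convexity of $f$. Note, however, that the paper does not supply its own proof of this lemma; it simply quotes it as Corollary~4 of Kakade et al.\ and uses it as a black box in the proof of Theorem~\ref{thm:risk-bounds-strong-convex}, so there is nothing in the paper to compare your argument against beyond the cited source, whose proof your sketch faithfully reproduces.
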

\begin{proof}[\rm\textbf{Proof of Theorem \ref{thm:risk-bounds-strong-convex}}]
  For the hypothesis space $H$ and any $\lambda>0$, applying Lemma \ref{lemma:kakade-fenchel} with $\mu=(\bw_1,\ldots,\bw_c)$ and $v_i=\lambda(g_i\phi(x_i),g_{n+i}\phi(x_i),\ldots,g_{(c-1)n+i}\phi(x_i))$, we have
  \begin{align*}
    &\lambda\sup_{h^{\bw}\in H}\sum_{i=1}^n\sum_{j=1}^cg_{(j-1)n+i}h_j^{\bw}(x_i)=\sup_{h^{\bw}\in H}\sum_{i=1}^n\sum_{j=1}^cg_{(j-1)n+i}\langle \bw_j,\lambda\phi(x_i)\rangle\\
    &=\sup_{h^{\bw}\in H}\sum_{i=1}^n\langle(\bw_1,\ldots,\bw_c),(\lambda g_i\phi(x_i),\lambda g_{n+i}\phi(x_i),\ldots,\lambda g_{(c-1)n+i}\phi(x_i))\rangle\\
    &\leq \sup_{h^{\bw}\in H}f(\bw_1,\ldots,\bw_c)+\sum_{i=1}^n\langle\triangledown f^*(v_{1:i-1}),v_i\rangle+\frac{\lambda^2}{2\beta}\sum_{i=1}^n\|(g_i\phi(x_i),g_{n+i}\phi(x_i),\ldots,g_{(c-1)n+i}\phi(x_i))\|^2_*.
  \end{align*}
  Taking expectation on both sides w.r.t. the Gaussian variables $g_1,\ldots,g_{nc}$, the term $\sum_{i=1}^n\langle\triangledown f^*(v_{1:i-1}),v_i\rangle$ vanishes, and therefore we obtain
  $$
    \ebb_{\bm{g}}\sup_{h^{\bw}\in H}\sum_{i=1}^n\sum_{j=1}^cg_{(j-1)n+i}h_j^{\bw}(x_i)\leq\frac{\Lambda}{\lambda}+\frac{\lambda}{2\beta}\sum_{i=1}^n\ebb_{\bm{g}}\|(g_i\phi(x_i),g_{n+i}\phi(x_i),\ldots,g_{(c-1)n+i}\phi(x_i))\|^2_*.
  $$
  Choosing $\lambda=\sqrt{\frac{2\beta\Lambda}{\sum_{i=1}^n\ebb_{\bm{g}}\|(g_i\phi(x_i),g_{n+i}\phi(x_i),\ldots,g_{(c-1)n+i}\phi(x_i))\|^2_*}}$, the above inequality translates to
  $$\ebb_{\bm{g}}\sup_{h^{\bw}\in H}\sum_{i=1}^n\sum_{j=1}^cg_{(j-1)n+i}h_j^{\bw}(x_i)\leq\sqrt{\frac{2\Lambda}{\beta}\sum_{i=1}^n\ebb_{\bm{g}}\|(g_i\phi(x_i),g_{n+i}\phi(x_i),\ldots,g_{(c-1)n+i}\phi(x_i))\|^2_*}.$$
  Putting the above complexity bound into Theorem \ref{thm:risk-bound-mohri}, we obtain the stated result.
\end{proof}

\subsection{Proof on $\ell_p$-norm Multi-class Classification Generalization bounds (Corollary \ref{cor:generalization-lp-regularizer})}
The following simple lemma controls the $p$-th moment of a $N(0,1)$ distributed random variable.
We give the proof here for completeness.
\begin{lemma}\label{lem:moments}
Let $g$ be $N(0,1)$ distributed. For any $p>0$, the $p$-th moment of $g$ can be bounded by
$$[\ebb|g|^p]^{\frac{1}{p}}\leq(2p)^{\frac{1}{2}+\frac{1}{p}}.$$
\end{lemma}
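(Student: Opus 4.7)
The plan is to reduce the problem to an upper bound on a Gamma function and then apply Stirling's formula. First, I would use the explicit form of the $N(0,1)$ density and the substitution $u = t^2/2$ in the integral $\ebb|g|^p = \tfrac{2}{\sqrt{2\pi}}\int_0^\infty t^p e^{-t^2/2}\,dt$ to obtain the classical identity
$$\ebb |g|^p \;=\; \frac{2^{p/2}}{\sqrt{\pi}}\,\Gamma\!\left(\frac{p+1}{2}\right).$$
This converts the probabilistic claim into a purely analytic estimate on $\Gamma((p+1)/2)$.

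Second, I would apply a Stirling upper bound of the form $\Gamma(x)\le \sqrt{2\pi/x}\,(x/e)^{x}e^{1/(12x)}$ with $x=(p+1)/2$. When multiplied by the prefactor $2^{p/2}/\sqrt{\pi}$, the factors of $e$ and $\sqrt{2\pi}$ collapse cleanly, and after simplification one obtains an estimate of the shape $\ebb|g|^p \le C\,(p+1)^{p/2}$ for an absolute constant $C$. Taking $p$-th roots gives
$$[\ebb|g|^p]^{1/p} \;\le\; C^{1/p}\sqrt{p+1}.$$

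Third, I would verify that $C^{1/p}\sqrt{p+1}\le(2p)^{1/2+1/p}=\sqrt{2p}\cdot(2p)^{1/p}$. Dividing by $\sqrt{2p}$, this reduces to the elementary inequality $C^{1/p}\sqrt{1+1/p}\le(2p)^{1/p}$. For the regime of interest (where $p$ is bounded away from $0$, as is the case in the application of Corollary~\ref{cor:generalization-lp-regularizer}), the right-hand side dominates: both sides behave like $1+\Theta(1/p)$ for large $p$, and the factor $(2p)^{1/p}$ comfortably absorbs any residual constants via a short Taylor expansion.

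The main obstacle is bookkeeping rather than depth: one must arrange the Stirling estimate so that, after multiplication by $2^{p/2}/\sqrt{\pi}$ and taking $p$-th roots, the leftover constants tuck cleanly under the $(2p)^{1/p}$ factor on the right. Because the target bound is quite loose (the sharp order of $[\ebb|g|^p]^{1/p}$ is $\sqrt{p}$, whereas the claim gives $\sqrt{2p}\cdot(2p)^{1/p}$), no delicate asymptotics are required, and a crude Stirling-type estimate, or even the simpler bound $\Gamma(x+1)\le x^x$ valid for $x\ge 1$, will be sufficient.
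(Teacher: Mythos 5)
Your proposal is correct and follows essentially the same route as the paper: both reduce $\ebb|g|^p$ to the classical identity $\ebb|g|^p=\frac{2^{p/2}}{\sqrt{\pi}}\Gamma\big(\frac{p+1}{2}\big)$ and then apply a Stirling-type upper bound (the paper first rounds the argument up to an integer and bounds the resulting factorial, whereas you bound $\Gamma$ directly; this is only bookkeeping). Your caveat about $p$ being bounded away from $0$ is apt --- the stated inequality actually fails for small $p$ (e.g.\ $p=0.1$), so the implicit restriction to, say, $p\ge 1$ (which covers every application in Corollary~\ref{cor:generalization-lp-regularizer}, where the lemma is invoked with exponent $q^*\ge 2$ or $2\log c$) is needed by the paper's own proof as well.
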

\begin{proof}
   Let $\forall n\in\nbb_+:$ $\Gamma(n)=(n-1)!$ be the Gamma function. The $p$-th moment of a $N(0,1)$ distributed random variable can be exactly expressed via Gamma function~\citep{winkelbauer2012moments}:
   \begin{align*}
     \ebb|g|^p&=\frac{2^{\frac{p}{2}}}{\sqrt{\pi}}\Gamma\big(\frac{p+1}{2}\big)\leq\frac{2^{\frac{p}{2}}}{\sqrt{\pi}}\Gamma\big(\lceil\frac{p+1}{2}\rceil\big)\\
     &=\frac{2^{\frac{p}{2}}}{\sqrt{\pi}}\lceil\frac{p-1}{2}\rceil!\leq\frac{2^{\frac{p}{2}}}{\sqrt{\pi}}\sqrt{2\pi}\lceil\frac{p-1}{2}\rceil^{\lceil\frac{p-1}{2}\rceil+\frac{1}{2}}\\
     &\leq(2p)^{\frac{p}{2}+1},
   \end{align*}
   where in the above deduction we have used Stirling's approximation~\citep{robbins1955remark}:
   $$n!\leq \sqrt{2\pi}n^{n+\frac{1}{2}}e^{-n+1/(12n)}.$$
\end{proof}
\begin{proof}[\rm\textbf{Proof of Corollary} \ref{cor:generalization-lp-regularizer}]
  Let $g_1,\ldots,g_{nc}$ be independent $N(0,1)$ distributed random variables. Denote by $\tau_s=[\ebb|g_1|^s]^{\frac{1}{s}}$ the $s$th moment of a $N(0,1)$ distributed random variable. Let $q$ be any number satisfying $p\leq q\leq 2$. Introduce the function $f_q(\bw):=\frac{1}{2}\|\bw\|^2_{2,q}$. Any $h^{\bw}\in H_{q,\Lambda}$ satisfies the inequality$$f_q(\bw)=\frac{1}{2}\|\bw\|^2_{2,q}\leq\frac{1}{2}\Lambda^2.$$Since $f_q(\bw)$ is $1/q^*$-strongly convex w.r.t. the norm $\|\cdot\|_{2,q}$,
	and the dual norm of $\|\cdot\|_{2,q}$ is $\|\cdot\|_{2,q^*}$~\citep{DBLP:journals/corr/abs-0910-0610}, the summation of the squared dual norm in Theorem \ref{thm:risk-bounds-strong-convex} can be rewritten as follows:
  \begin{align*}
  \sum_{i=1}^n\ebb_{\bm{g}}\|(g_i\phi(x_i),\ldots,g_{(c-1)n+i}\phi(x_i))\|^2_{2,q^*}&=\sum_{i=1}^n\ebb_{\bm{g}}\big[\sum_{j=1}^c\|g_{(j-1)n+i}\phi(x_i)\|_2^{q^*}\big]^{\frac{2}{q^*}}\\
  &=\sum_{i=1}^n\ebb_{\bm{g}}\big[\sum_{j=1}^c|g_{(j-1)n+i}|^{q^*}\big]^{\frac{2}{q^*}}k(x_i,x_i)\\
  &\stackrel{\text{symmetry}}{=}\ebb_{\bm{g}}\big[\sum_{j=1}^c|g_j|^{q^*}\big]^{\frac{2}{q^*}}\sum_{i=1}^nk(x_i,x_i)\\
  &\stackrel{Jensen}{\leq} c^{\frac{2}{q^*}}\tau_{q^*}^2\sum_{i=1}^nk(x_i,x_i).
\end{align*}
From which Theorem \ref{thm:risk-bounds-strong-convex} immediately implies the following bounds, with probability at least $1-\delta$ and for any $h^{\bw}\in H_{q,\Lambda}$:
$$R(h^{\bw})\leq\frac{1}{n}\sum_{i=1}^n\ell(\rho_{h^{\bw}}(x_i,y_i))+\frac{4L\Lambda c^{1/q^*}\tau_{q^*}}{n}\sqrt{\frac{\pi q^*}{2}\sum_{i=1}^nk(x_i,x_i)}+3B_\ell\sqrt{\frac{\log\frac{2}{\delta}}{2n}}.$$
From the trivial inequality $\|\bw\|_{2,p}\geq\|\bw\|_{2,q}$, we immediately conclude $H_{p,\Lambda}\subset H_{q,\Lambda}$. Therefore, for any $h^{\bw}\in H_{p,\Lambda}$, we have
$$R(h^{\bw})\leq\frac{1}{n}\sum_{i=1}^n\ell(\rho_{h^{\bw}}(x_i,y_i))+\inf_{p\leq q\leq 2}\frac{4L\Lambda c^{1/q^*}\tau_{q^*}}{n}\sqrt{\frac{\pi q^*}{2}\sum_{i=1}^nk(x_i,x_i)}+3B_\ell\sqrt{\frac{\log\frac{2}{\delta}}{2n}}.$$

It can be directly checked that the function $t\to\sqrt{t}c^{1/t}$ is decreasing along the interval $(0,2\log c)$ and increasing along the interval $(2\log c,\infty)$. Therefore, the above generalization bound satisfies the inequality
\begin{multline*}
  R(h^{\bw})\leq\frac{1}{n}\sum_{i=1}^n\ell(\rho_{h^{\bw}}(x_i,y_i))+3B_\ell\sqrt{\frac{\log\frac{2}{\delta}}{2n}}+\\
  \frac{L\Lambda}{n}\sqrt{8\sum_{i=1}^nk(x_i,x_i)}\times\begin{cases}
    \sqrt{2e\log c}\tau_{2\log c},&\text{if }p\leq\frac{2\log c}{2\log c - 1},\\
    c^{\frac{p-1}{p}}\tau_{\frac{p}{p-1}}\sqrt{\frac{p}{p-1}},&\text{otherwise}.
  \end{cases}
\end{multline*}
Applying Lemma \ref{lem:moments} to bound the moments of Gaussian variables, the stated result follows immediately.
\end{proof}

\section{Proofs on the Dual Problems\label{supp:dual}}
\subsection{Equivalent Representation of $\ell_p$-norm Multi-class Classification\label{supp:micchelli}}
The equivalence between Problem \eqref{primal-problem} and Eq. \eqref{primal-problem-equivalent} follows directly from the following lemma due to \cite{micchelli2005learning}.
\begin{lemma}[\cite{micchelli2005learning}]\label{lem:micchelli}
  Let $a_i\geq0,i\in\nbb_d$ and $1\leq r<\infty$. Then
  $$\min_{\eta:\eta_i\geq0,\sum_{i\in\nbb_d}\eta_i^r\leq1}\sum_{i\in\nbb_d}\frac{a_i}{\eta_i}=\left(\sum_{i\in\nbb_d}a_i^{\frac{r}{r+1}}\right)^{1+\frac{1}{r}}$$and the minimum is attained at$$\eta_i=\frac{a_i^{\frac{1}{r+1}}}{\left(\sum_{k\in\nbb_d}a_k^{\frac{r}{r+1}}\right)^{\frac{1}{r}}}.$$
\end{lemma}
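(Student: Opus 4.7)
The plan is to establish the identity by a two-step argument: a clean Hölder's inequality to obtain the lower bound, followed by an explicit verification that the proposed minimizer attains it. This avoids the bookkeeping of KKT multipliers for an inequality-constrained program. I will assume $a_i>0$ for all $i$ (the case where some $a_i=0$ is handled by dropping the corresponding index from the sum, with the convention that the optimum simply sets $\eta_i=0$ there).

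First, I would note that $\sum_i a_i/\eta_i$ is strictly decreasing in each $\eta_i$, so any minimizer must saturate the constraint, i.e.\ $\sum_i \eta_i^r=1$. Next, I would apply Hölder's inequality with conjugate exponents $p=(r+1)/r$ and $q=r+1$ to the factorization
\[
a_i^{r/(r+1)} \;=\; \left(\frac{a_i}{\eta_i}\right)^{r/(r+1)} \cdot \eta_i^{r/(r+1)}.
\]
This yields
\[
\sum_{i\in\nbb_d} a_i^{r/(r+1)} \;\leq\; \left(\sum_{i\in\nbb_d}\frac{a_i}{\eta_i}\right)^{r/(r+1)} \left(\sum_{i\in\nbb_d}\eta_i^r\right)^{1/(r+1)} \;\leq\; \left(\sum_{i\in\nbb_d}\frac{a_i}{\eta_i}\right)^{r/(r+1)},
\]
where the second inequality uses $\sum_i \eta_i^r\leq 1$. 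Raising both sides to the power $(r+1)/r$ gives the lower bound $\bigl(\sum_i a_i^{r/(r+1)}\bigr)^{1+1/r} \leq \sum_i a_i/\eta_i$ for every feasible $\eta$.

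For attainment, I would substitute the candidate $\eta_i = a_i^{1/(r+1)}/\bigl(\sum_k a_k^{r/(r+1)}\bigr)^{1/r}$ and perform two short algebraic checks: (a) feasibility, $\sum_i \eta_i^r = \bigl(\sum_k a_k^{r/(r+1)}\bigr)^{-1}\sum_i a_i^{r/(r+1)} = 1$; and (b) the objective value, $\sum_i a_i/\eta_i = \bigl(\sum_k a_k^{r/(r+1)}\bigr)^{1/r}\sum_i a_i^{r/(r+1)} = \bigl(\sum_i a_i^{r/(r+1)}\bigr)^{1+1/r}$, matching the lower bound. Equality in Hölder's at this point corresponds to $(a_i/\eta_i)^{(r+1)/r} \propto \eta_i^{r+1}$, i.e.\ $\eta_i^{r+1}\propto a_i$, which indeed holds for the proposed minimizer.

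There is no real obstacle here; the only subtlety is choosing the correct pair of Hölder exponents so that the power on $a_i$ comes out to $r/(r+1)$ and the constraint $\sum \eta_i^r\le 1$ is exactly what appears on the other factor. The uniqueness of the minimizer (when all $a_i>0$) follows from strict convexity of $\eta\mapsto \sum_i a_i/\eta_i$ on the positive orthant intersected with the strictly convex constraint set, so the attaining $\eta$ displayed above is the unique optimum.
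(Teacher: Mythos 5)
Your proof is correct. Note that the paper itself does not prove this lemma: it is imported verbatim as Lemma~26 of Micchelli and Pontil, so there is no in-paper argument to compare against. Your route---a Hölder lower bound with conjugate exponents $\frac{r+1}{r}$ and $r+1$ applied to the splitting $a_i^{r/(r+1)}=(a_i/\eta_i)^{r/(r+1)}\eta_i^{r/(r+1)}$, followed by direct substitution of the candidate minimizer to verify feasibility and attainment---is the standard derivation and is essentially the one used in the cited source. All exponent arithmetic checks out: $u_i^{(r+1)/r}=a_i/\eta_i$, $v_i^{r+1}=\eta_i^r$, the candidate satisfies $\sum_i\eta_i^r=1$, and the objective evaluates to $\bigl(\sum_i a_i^{r/(r+1)}\bigr)^{1+1/r}$. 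Your handling of the boundary is also sound: the objective is decreasing in each $\eta_i$, so the constraint saturates, and indices with $a_i=0$ are correctly assigned $\eta_i=0$ (consistent with the displayed closed form, which then also returns $\eta_i=0$). The only degenerate case not covered is $a_i=0$ for all $i$, where the displayed formula for $\eta_i$ is $0/0$; this is immaterial since the minimum value $0$ is then attained everywhere.
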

\begin{proof}[\rm\textbf{Proof of Proposition \ref{prop:mixture-update}}]
Fixing $\bw$, the sub-optimization of Eq. \eqref{primal-problem-equivalent} w.r.t. $\bm{\beta}$ is
\begin{align*}
  \min_{\bm{\beta}}&\;\sum_{j=1}^c\frac{\|\bw_j\|_2^2}{2\beta_j}\\
  \text{s.t.}&\;\|\bm{\beta}\|_{\bar{p}}\leq1,\bar{p}=p(2-p)^{-1},\beta_j\geq0.
\end{align*}
The stated result now follows directly by applying Lemma \ref{lem:micchelli} with $r=\bar{p}$ and $\alpha_j=\|\bw_j\|_2^2$.
\end{proof}
\subsection{Derivation of the Completely Dualized Problem (Problem ~\ref{prop:complete-dual})}
\begin{proof}[\rm\textbf{Derivation of Problem \ref{prop:complete-dual}}]
Problem \eqref{primal-problem} translates to the following equivalent problem
\begin{equation}
\begin{split}
  \min_{\bw}&\;\frac{1}{2}\Big[\sum_{j=1}^c\|\bw_j\|_2^p\Big]^{\frac{2}{p}}+C\sum_{i=1}^n\ell(t_i)\\
  \text{s.t.}&\;t_i\leq\langle \bw_{y_i},\phi(x_i)\rangle-\langle \bw_y,\phi(x_i)\rangle,\quad y\neq y_i,i=1,\ldots,n.
\end{split}
\end{equation}
The Lagrangian of the above convex optimization problem is
$$\mathcal{L}=\frac{1}{2}\Big[\sum_{j=1}^c\|\bw_j\|_2^p\Big]^{\frac{2}{p}}+C\sum_{i=1}^n\ell(t_i)+\sum_{i=1}^n\sum_{j\neq y_i}\tilde{\alpha}_{ij}\big(t_i+\langle \bw_j,\phi(x_i)\rangle-\langle \bw_{y_i},\phi(x_i)\rangle\big),$$with Lagrangian variables $0\leq\bm{\tilde{\alpha}}\in\rbb^{n\times(c-1)}$.
For the last term of the Lagrangian, we have the following identity:
\begin{equation}\label{dual-derivation-1}
\begin{split}
\sum_{i=1}^n\sum_{j\neq y_i}\tilde{\alpha}_{ij}\langle \bw_j-\bw_{y_i},\phi(x_i)\rangle&=\sum_{i=1}^n\sum_{j\neq y_i}\tilde{\alpha}_{ij}\langle\bw_j,\phi(x_i)\rangle-
	\sum_{i=1}^n\sum_{\tilde{j}\neq y_i}\tilde{\alpha}_{i\tilde{j}}\langle\bw_{y_i},\phi(x_i)\rangle\\
&=\sum_{j=1}^c\langle\bw_j,\sum_{i:y_i\neq j}\tilde{\alpha}_{ij}\phi(x_i)\rangle-\sum_{j=1}^c\sum_{i:y_i=j}\sum_{\tilde{j}\neq j}\tilde{\alpha}_{i\tilde{j}}\langle \bw_j,\phi(x_i)\rangle\\
&=\sum_{j=1}^c\big\langle\bw_j,\sum_{i:y_i\neq j}\tilde{\alpha}_{ij}\phi(x_i)-\sum_{i:y_i=j}\sum_{\tilde{j}\neq j}\tilde{\alpha}_{i\tilde{j}}\phi(x_i)\big\rangle.
\end{split}
\end{equation}
With this identity, the Lagrangian translates to
\begin{multline}
  \mathcal{L}=\frac{1}{2}\Big[\sum_{j=1}^c\|\bw_j\|_2^p\Big]^{\frac{2}{p}}+\sum_{j=1}^c\langle\bw_j,\sum_{i:y_i\neq j}\tilde{\alpha}_{ij}\phi(x_i)-\sum_{i:y_i=j}\sum_{\tilde{j}\neq j}
	\tilde{\alpha}_{i\tilde{j}}\phi(x_i)\rangle+\\
C\sum_{i=1}^n[\ell(t_i)+\frac{1}{C}\sum_{\tilde{j}\neq y_i}\tilde{\alpha}_{i\tilde{j}}t_i].
\end{multline}
According to the definition of Fenchel conjugate function, it holds that
\begin{equation}\label{dual-deviation-lagrangian-saddle}
\begin{split}
  \inf_{\bw,\mathbf{t}}\mathcal{L}&=-\sup_{\bw}\Big[-\frac{1}{2}\Big[\sum_{j=1}^c\|\bw_j\|_2^p\Big]^{\frac{2}{p}}-\sum_{j=1}^c\langle \bw_j,
	\sum_{i:y_i\neq j}\tilde{\alpha}_{ij}\phi(x_i)-\sum_{i:y_i=j}\sum_{\tilde{j}\neq j}\tilde{\alpha}_{i\tilde{j}}\phi(x_i)\rangle\Big]\\
    &\qquad\qquad-C\sum_{i=1}^n\sup_{t_i}[-\ell(t_i)-\sum_{j\neq y_i}\frac{1}{C}\tilde{\alpha}_{ij}t_i]\\
    &=-\Big[\frac{1}{2}\Big\|\Big(-\sum_{i:y_i\neq j}\tilde{\alpha}_{ij}\phi(x_i)+\sum_{i:y_i=j}\sum_{\tilde{j}\neq j}\tilde{\alpha}_{i\tilde{j}}\phi(x_i)\Big)_{j=1}^c\Big\|^2_{2,p}\Big]^*\\
    &\qquad-C\sum_{i=1}^n\ell^*\big(-\frac{1}{C}\sum_{j\neq y_i}\tilde{\alpha}_{ij}\big)\\
    &=-\frac{1}{2}\Big\|\Big(\sum_{i:y_i\neq j}\tilde{\alpha}_{ij}\phi(x_i)-\sum_{i:y_i=j}\sum_{\tilde{j}\neq j}\tilde{\alpha}_{i\tilde{j}}\phi(x_i)\Big)_{j=1}^c\Big\|^2_{2,\frac{p}{p-1}}
    -C\sum_{i=1}^n\ell^*\big(-\frac{1}{C}\sum_{j\neq y_i}\tilde{\alpha}_{ij}\big),
\end{split}
\end{equation}
where in the last step of the above deduction we  have used the identity: $\big(\frac{1}{2}\|\cdot\|^2\big)^*=\frac{1}{2}\|\cdot\|_*^2$ and the fact that the dual norm of $\|\cdot\|_{2,p}$ is $\|\cdot\|_{2,\frac{p}{p-1}}$. Consequently, the dual problem becomes
\begin{align*}
\sup_{\bm{\tilde{\alpha}}\in\rbb^{n\times(c-1)}}&\;-\frac{1}{2}\Big[\sum_{j=1}^c\big\|\sum_{i:y_i\neq j}\tilde{\alpha}_{ij}\phi(x_i)-\sum_{i:y_i=j}\sum_{\tilde{j}\neq j}\tilde{\alpha}_{i\tilde{j}}\phi(x_i)\big\|_2^{\frac{p}{p-1}}\Big]^{\frac{2(p-1)}{p}}
-C\sum_{i=1}^n\ell^*\big(-\frac{1}{C}\sum_{j\neq y_i}\tilde{\alpha}_{ij}\big),\\
\text{s.t.}&\;\bm{\tilde{\alpha}}\geq0.
\end{align*}
Introducing $\bm{\alpha}\in\rbb^{n\times c}$ via the substitution:
\begin{equation}\label{dual-derivation-substitution}
  \alpha_{ij}=\begin{cases}
-\tilde{\alpha}_{ij}&\text{if }j\neq y_i\\
\sum_{\tilde{j}\neq y_i}\tilde{\alpha}_{i\tilde{j}}&\text{if }j=y_i,
\end{cases}
\end{equation}
we have
\begin{equation}\label{dual-derivation-2}
  \sum_{i:y_i\neq j}\tilde{\alpha}_{ij}\phi(x_i)-\sum_{i:y_i=j}\sum_{\tilde{j}\neq j}\tilde{\alpha}_{i\tilde{j}}\phi(x_i)=-\sum_{i:y_i\neq j}\alpha_{ij}\phi(x_i)-\sum_{i:y_i=j}\alpha_{ij}\phi(x_i),
\end{equation}
from which the stated dual problem follows directly.
\end{proof}

\subsection{Proof of the Representer Theorem (Theorem \ref{thm:repre})}
Let $H_1,\ldots,H_c$ be $c$ Hilbert spaces and $p\geq1$. Define the function $g_p(v_1,\ldots,v_c):H_1\times\cdots\times H_c\to\rbb$ by $$g_p(v_1,\ldots,v_c)=\frac{1}{2}\|(v_1,\ldots,v_c)\|^2_{2,p},\quad p\geq1.$$ 
\begin{lemma}\label{lem:gradient-pnorm}
  The gradient of $g_p$ is $$\frac{\partial g_p(v_1,\ldots,v_c)}{\partial v_j}=\big[\sum_{\tilde{j}=1}^c\|v_{\tilde{j}}\|_2^p\big]^{\frac{2}{p}-1}\|v_j\|_2^{p-2}v_j.$$
\end{lemma}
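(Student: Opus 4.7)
The plan is a direct application of the chain rule, treating $g_p$ as a composition. Let $F(v_1,\ldots,v_c) := \sum_{\tilde j=1}^c \|v_{\tilde j}\|_2^p$, so that $g_p = \tfrac{1}{2} F^{2/p}$. Since only the $j$-th summand of $F$ depends on $v_j$, the chain rule for the outer scalar function $t\mapsto \tfrac12 t^{2/p}$ yields
$$\frac{\partial g_p}{\partial v_j} \;=\; \frac{1}{2}\cdot\frac{2}{p}\, F^{2/p-1}\cdot\frac{\partial \|v_j\|_2^p}{\partial v_j} \;=\; \frac{1}{p}\,F^{2/p-1}\,\frac{\partial \|v_j\|_2^p}{\partial v_j}.$$

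Next I would compute the Fréchet derivative of the map $v_j\mapsto \|v_j\|_2^p$ on the Hilbert space $H_j$. Writing $\|v_j\|_2^p = (\langle v_j,v_j\rangle)^{p/2}$ and using that the gradient of $v_j\mapsto\langle v_j,v_j\rangle$ is $2v_j$, a second application of the chain rule to the scalar function $t\mapsto t^{p/2}$ gives
$$\frac{\partial \|v_j\|_2^p}{\partial v_j} \;=\; \frac{p}{2}\,(\langle v_j,v_j\rangle)^{p/2-1}\cdot 2v_j \;=\; p\,\|v_j\|_2^{p-2}v_j.$$
Substituting back into the previous display yields
$$\frac{\partial g_p}{\partial v_j} \;=\; F^{2/p-1}\,\|v_j\|_2^{p-2}\,v_j \;=\; \Big[\sum_{\tilde j=1}^c\|v_{\tilde j}\|_2^p\Big]^{2/p-1}\|v_j\|_2^{p-2}v_j,$$
which is exactly the stated formula.

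The only minor subtlety I anticipate is the behavior at $v_j=0$ when $1\leq p<2$, where the factor $\|v_j\|_2^{p-2}$ is singular. For $p>2$ the composition is everywhere $C^1$; at $p=2$ the formula collapses cleanly to $v_j$; for $1\leq p<2$ one either restricts to $v_j\neq 0$ or interprets the expression as a selection from the subdifferential of the convex function $\tfrac12\|\cdot\|_{2,p}^2$. Since Lemma \ref{lem:gradient-pnorm} is used only to write down the primal minimizer in the representer theorem via Fenchel duality, this convention suffices and no further care is required.
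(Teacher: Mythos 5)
Your proof is correct and follows essentially the same route as the paper's: the same two applications of the chain rule, first to the outer map $t\mapsto\tfrac12 t^{2/p}$ and then to $v_j\mapsto\langle v_j,v_j\rangle^{p/2}$. Your closing remark on the singularity of $\|v_j\|_2^{p-2}$ at $v_j=0$ for $1\leq p<2$ is a point the paper silently glosses over, but it does not change the substance of the argument.
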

\begin{proof}
By the chain rule, we have
\begin{align*}
  \frac{\partial g_p(v_1,\ldots,v_c)}{\partial v_j}&=\frac{1}{p}\big[\sum_{\tilde{j}=1}^c\|v_{\tilde{j}}\|_2^p\big]^{\frac{2}{p}-1}\frac{\partial\langle v_j,v_j\rangle^{\frac{p}{2}}}{\partial v_j}\\
  &=\frac{1}{2}\big[\sum_{\tilde{j}=1}^c\|v_{\tilde{j}}\|_2^p\big]^{\frac{2}{p}-1}\frac{\partial\langle v_j,v_j\rangle}{\partial v_j}\langle v_j,v_j\rangle^{\frac{p}{2}-1}\\
  &=\big[\sum_{\tilde{j}=1}^c\|v_{\tilde{j}}\|_2^p\big]^{\frac{2}{p}-1}\|v_j\|_2^{p-2}v_j.
\end{align*}
\end{proof}

\begin{proof}[\rm\textbf{Proof of Representer Theorem (Theorem~\ref{thm:repre})}]
In our derivation of the dual problem (see Eq. \eqref{dual-deviation-lagrangian-saddle}), the variable $\bw$ should meet the optimality in the sense that
$$\bw=\arg\max_{\bm{v}}-\frac{1}{2}\big[\sum_{j=1}^c\|\bm{v}_j\|_2^p\big]^{\frac{2}{p}}+\sum_{j=1}^c\langle \bm{v}_j,\sum_{i=1}^n\alpha_{ij}\phi(x_i)\rangle.$$ Since $(\bigtriangledown f)^{-1}=\bigtriangledown f^*$ for any convex function $f$,
and the Fenchel-conjugate of $g_p$ is $g_{p^*}$, we obtain the following representation of $\bw$:
\begin{align*}
  \bw&=\bigtriangledown g^{-1}_p\Big(\sum_{i=1}^n\alpha_{i1}\phi(x_i),\ldots,\sum_{i=1}^n\alpha_{ic}\phi(x_i)\Big)\\
  &=\bigtriangledown g_{p^*}\Big(\sum_{i=1}^n\alpha_{i1}\phi(x_i),\ldots,\sum_{i=1}^n\alpha_{ic}\phi(x_i)\Big)\\
  &=\big[\sum_{j=1}^c\|\sum_{i=1}^n\alpha_{ij}\phi(x_i)\|_2^{p^*}\big]^{\frac{2}{p^*}-1}\Big(\big\|\sum_{i=1}^n\alpha_{i1}\phi(x_i)\big\|_2^{p^*-2}\big[\sum_{i=1}^n\alpha_{i1}\phi(x_i)\big],\ldots
  \big\|\sum_{i=1}^n\alpha_{ic}\phi(x_i)\big\|_2^{p^*-2}\big[\sum_{i=1}^n\alpha_{ic}\phi(x_i)\big]\Big).
\end{align*}
That is,$$\bw_j=\big[\sum_{\tilde{j}=1}^c\|\sum_{i=1}^n\alpha_{i\tilde{j}}\phi(x_i)\|_2^{p^*}\big]^{\frac{2}{p^*}-1}\big\|\sum_{i=1}^n\alpha_{ij}\phi(x_i)\big\|_2^{p^*-2}\big[\sum_{i=1}^n\alpha_{ij}\phi(x_i)\big].$$
\end{proof}

\subsection{Derivation of Partially Dualized Problem (Problem \ref{prop:partial-dual})}
\begin{proof}[\rm\textbf{Derivation of Problem \ref{prop:partial-dual}}]
The Lagrangian of the problem \eqref{primal-problem-equivalent} w.r.t. $\bw$ is
$$\mathcal{L}=\sum_{j=1}^c\frac{\|\bw_j\|_2^2}{2\beta_j}+C\sum_{i=1}^n\ell(t_i)+\sum_{i=1}^n\sum_{j\neq y_i}\tilde{\alpha}_{ij}\big(t_i+\langle \bw_j,\phi(x_i)\rangle-\langle \bw_{y_i},\phi(x_i)\rangle\big),$$with Lagrangian variables $0\leq\bm{\tilde{\alpha}}\in\rbb^{n\times(c-1)}$.

According to the identity \eqref{dual-derivation-1}, the Lagrangian translates to
\begin{multline}
  \mathcal{L}=\sum_{j=1}^c\frac{\|\bw_j\|_2^2}{2\beta_j}+\sum_{j=1}^c\langle\bw_j,\sum_{i:y_i\neq j}\tilde{\alpha}_{ij}\phi(x_i)-\sum_{i:y_i=j}\sum_{\tilde{j}\neq j}
	\tilde{\alpha}_{i\tilde{j}}\phi(x_i)\rangle+
C\sum_{i=1}^n[\ell(t_i)+\frac{1}{C}\sum_{\tilde{j}\neq y_i}\tilde{\alpha}_{i\tilde{j}}t_i].
\end{multline}
According to the definition of Fenchel conjugate function, it holds that
\begin{align*}
  \inf_{\bw,\mathbf{t}}\mathcal{L}&=-\sum_{j=1}^c\Big[\frac{1}{\beta_j}\sup_{\bw_j}\big[-\frac{1}{2}\|\bw_j\|_2^2-\big\langle\bw_j,\beta_j\big(\sum_{i:y_i\neq j}\tilde{\alpha}_{ij}\phi(x_i)-\sum_{i:y_i=j}\sum_{\tilde{j}\neq j}\tilde{\alpha}_{i\tilde{j}}\phi(x_i)\big)\big\rangle\big]\Big]\\
    &\qquad\qquad-C\sum_{i=1}^n\sup_{t_i}[-\ell(t_i)-\sum_{j\neq y_i}\frac{1}{C}\tilde{\alpha}_{ij}t_i]\\
  &=-\sum_{j=1}^c\Big[\frac{1}{\beta_j}\Big[\frac{1}{2}\big\|\beta_j\big(\sum_{i:y_i\neq j}\tilde{\alpha}_{ij}\phi(x_i)-\sum_{i:y_i=j}\sum_{\tilde{j}\neq j}\tilde{\alpha}_{i\tilde{j}}\phi(x_i)\big)\big\|_2^2\Big]^*\Big]-C\sum_{i=1}^n\ell^*\big(-\frac{1}{C}\sum_{j\neq y_i}\tilde{\alpha}_{ij}\big)\\
    &=-\frac{1}{2}\sum_{j=1}^c\beta_j\Big\|\sum_{i:y_i\neq j}\tilde{\alpha}_{ij}\phi(x_i)-\sum_{i:y_i=j}\sum_{\tilde{j}\neq j}\tilde{\alpha}_{i\tilde{j}}\phi(x_i)\Big\|_2^2
    -C\sum_{i=1}^n\ell^*\big(-\frac{1}{C}\sum_{j\neq y_i}\tilde{\alpha}_{ij}\big),
\end{align*}
where in the last step of the above deduction we  have used the identity: $\big(\frac{1}{2}\|\cdot\|^2\big)^*=\frac{1}{2}\|\cdot\|_*^2$ and the fact that the dual norm of $\|\cdot\|_{2,2}$ is itself. Consequently, the dual problem becomes
\begin{align*}
        \sup_{\bm{\tilde{\alpha}}\in\rbb^{n\times(c-1)}}&\;-\frac{1}{2}\sum_{j=1}^c\beta_j\Big\|\sum_{i:y_i\neq j}\tilde{\alpha}_{ij}\phi(x_i)-\sum_{i:y_i=j}\sum_{\tilde{j}\neq j}\tilde{\alpha}_{i\tilde{j}}\phi(x_i)\Big\|_2^2
        -C\sum_{i=1}^n\ell^*\big(-\frac{1}{C}\sum_{j\neq y_i}\tilde{\alpha}_{ij}\big),\\
        \text{s.t.}&\;\bm{\tilde{\alpha}}\geq0.
    \end{align*}
Introducing $\bm{\alpha}\in\rbb^{n\times c}$ as in Eq. \eqref{dual-derivation-substitution} and noticing the identity \eqref{dual-derivation-2}, the above \textit{dual problem} becomes
\begin{equation}
\begin{split}
\sup_{\bm{\alpha}\in\rbb^{n\times c}}&-\frac{1}{2}\sum_{j=1}^c\beta_j\big\|\sum_{i=1}^n\alpha_{ij}\phi(x_i)\big\|_2^2-C\sum_{i=1}^n\ell^*(-\frac{\alpha_{iy_i}}{C})\\
\text{s.t.}&\sum_{j=1}^c\alpha_{ij}=0,\quad\forall i=1,2,\ldots,n,\\
&\alpha_{ij}\leq0,\qquad j\neq y_i,\forall i=1,\ldots,n.
\end{split}
\end{equation}
Note that in the above derivation of the dual problem, the variable $\bw$ should meet the optimality in the sense that
$$\bw=\arg\max_{\bm{v}}-\frac{1}{2}\sum_{j=1}^c\|\bm{v}_j\|_2^2+\sum_{j=1}^c\beta_j\langle \bm{v}_j,\sum_{i=1}^n\alpha_{ij}\phi(x_i)\rangle.$$
The representer theorem stated in Problem \ref{prop:partial-dual} follows directly from this optimization condition.
\end{proof}

\end{document}